\documentclass[sigconf, nonacm]{acmart}
\AtBeginDocument{%
  }

\setcopyright{acmlicensed}
\copyrightyear{2018}
\acmYear{2018}
\acmDOI{XXXXXXX.XXXXXXX}
\acmConference[Conference acronym 'XX]{Make sure to enter the correct
  conference title from your rights confirmation email}{June 03--05,
  2018}{Woodstock, NY}
\acmISBN{978-1-4503-XXXX-X/2018/06}

\usepackage[T1]{fontenc}
\usepackage[utf8]{inputenc} 
\usepackage{amsfonts}
\usepackage{amsthm}
\usepackage{mathtools}
\usepackage{bm}
\usepackage{bbm}
\usepackage{dsfont}
\usepackage{mathrsfs}
\usepackage{graphicx}
\usepackage{subfig} 
\usepackage{booktabs}
\usepackage{multirow}
\usepackage{makecell}
\usepackage{diagbox}
\usepackage{adjustbox}
\usepackage{threeparttable}
\usepackage{caption}
\usepackage[ruled,linesnumbered,noend]{algorithm2e}
\usepackage{enumitem}
\usepackage{multicol}
\usepackage{balance}
\usepackage{float}
\usepackage{cuted}
\usepackage{xspace}
\usepackage{textcomp}
\usepackage{nicefrac}
\usepackage{microtype}
\usepackage{pifont}
\usepackage[normalem]{ulem}
\usepackage[dvipsnames]{xcolor}
\usepackage{numprint}
\usepackage{siunitx}
\usepackage[most]{tcolorbox}
\usepackage{appendix}
\usepackage[hang,flushmargin]{footmisc}
\usepackage{url}
\usepackage{hyperref}

\PassOptionsToPackage{square,numbers}{natbib}

\floatname{algorithm}{Function}
\floatname{algorithm}{Procedure}

\newtheorem{theorem}{Theorem}

\newtheorem{definition}{Definition}
\npdecimalsign{.}

\begin{document}
\title{Alignment-Guided Largest Table Overlap Size Estimation}

\author{Ge Lee}
\authornote{This work was done while the author was a visiting student at The University of Queensland and affiliated with Data61, CSIRO.}
\affiliation{%
  \institution{RMIT University}
  \city{Melbourne}
  \country{Australia}}
\email{ge.lee@student.rmit.edu.au}

\author{Shixun Huang}
\affiliation{%
  \institution{University of Wollongong}
  \city{Wollongong}
  \country{Australia}}
\email{shixunh@uow.edu.au}

\author{Zhifeng Bao}
\affiliation{%
  \institution{The University of Queensland}
  \city{Brisbane}
  \country{Australia}}
\email{zhifeng.bao@uq.edu.au}

\author{Shazia Sadiq}
\affiliation{%
  \institution{The University of Queensland}
  \city{Brisbane}
  \country{Australia}}
\email{s.sadiq@uq.edu.au}

\author{Yanchang Zhao}
\affiliation{%
  \institution{Data61, CSIRO}
  \city{Canberra}
  \country{Australia}}
\email{yanchang.zhao@csiro.au}

\newcommand{\huang}[1]{{\color{purple}{\bf{Huang comment:}} #1}}
\newcommand{\bao}[1]{{\color{purple}{\bf{Bao comment:}} #1}}
\newcommand{\lee}[1]{{\color{purple}{\bf{Ge comment:}} #1}}

\newcommand{\todo}[1]{{\color{black} #1}}
\newcommand{\revised}[1]{{\color{black} #1}}

\newcommand{\plan}[1]{{\color{black} #1}}            
\newcommand{\revtwo}[1]{{\color{black} #1}}    
\newcommand{\revthree}[1]{{\color{black} #1}}   
\newcommand{\revfour}[1]{{\color{black} #1}}  

\newcommand{\commt}[1]{\textit{  // #1}}

\newcommand{\wiki}{Wiki}
\newcommand{\git}{Git}
\newcommand{\gitq}{Git-Query}
\newcommand{\real}{RealEstate}
\newcommand{\re}{RE}


\newcommand{\ec}{\textsc{Encode-Compare}}
\newcommand{\encode}{\textsc{Encode}}
\newcommand{\compare}{\textsc{Compare}}
\newcommand{\iep}{\textsc{Interact-Encode-Predict}}
\newcommand{\interact}{\textsc{Interact}}
\newcommand{\predict}{\textsc{Predict}}

\newcommand{\js}{\texttt{Jaccard}}
\newcommand{\jbu}{\texttt{Jaccard-BU}}
\newcommand{\jbn}{\texttt{Jaccard-BN}}

\newcommand{\bert}{\texttt{BERT}}
\newcommand{\bertR}{\texttt{BERT-R}}
\newcommand{\bertT}{\texttt{BERT-T}}
\newcommand{\bertHT}{\texttt{BERT-HT}}

\newcommand{\rob}{\texttt{RoBERTa}}
\newcommand{\robR}{\texttt{RoBERTa-R}}
\newcommand{\robT}{\texttt{RoBERTa-T}}
\newcommand{\robHT}{\texttt{RoBERTa-HT}}

\newcommand{\turl}{\texttt{TURL}}
\newcommand{\embdi}{\texttt{EmbDI}}

\newcommand{\sloth}{\texttt{Sloth}}
\newcommand{\arma}{\texttt{Armadillo}}
\newcommand{\our}{\texttt{ALORE}}

\newcommand{\Sloth}{Sloth}
\newcommand{\Arma}{Armadillo}
\newcommand{\Our}{ALORE}

\definecolor{bgBest}{HTML}{A2C4EF}   
\definecolor{bgSec}{HTML}{D9E8F8}    

\newcommand{\hlBest}[1]{{%
    \setlength{\fboxsep}{1pt}%
    \colorbox{bgBest}{\textbf{#1}}%
}}
\newcommand{\hlSec}[1]{{%
    \setlength{\fboxsep}{1pt}%
    \colorbox{bgSec}{#1}%
}}


\newtcolorbox{revbox}{
    colback=gray!10, 
    colframe=white, 
    boxrule=0pt, 
    arc=1.5mm, 
    left=1.5mm, 
    right=1.5mm, 
    top=0.7mm, 
    bottom=0.7mm, 
    enhanced, 
    breakable,
    after skip=0.5em
}

\newcommand{\sdval}[2]{#1\textsubscript{\ensuremath{\,\pm\,}#2}}
\newcommand{\sdvalpct}[2]{%
  \makebox[2.3em][r]{#1}%
  \textsubscript{\ensuremath{\,\pm\,}\makebox[1.8em][l]{#2}}%
}
\newcommand{\oot}{\textsc{oot}}
\begin{abstract}

Fast estimation of the size of the largest overlap between tables enables blocking and query-by-table retrieval in large table repositories.
The first and the state-of-the-art estimator \Arma{} improves efficiency by embedding each table independently and approximating overlap ratio via embedding similarity. However, accurate estimation in heterogeneous repositories remains limited by three challenges: (C1)~overlap depends on \revfour{row--column structure, i.e., each matched cell must preserve both its row and column membership under a joint alignment of the two tables}, but existing encodings leave this structure to be inferred indirectly; (C2)~independent encoding provides no explicit channel for inter-table alignment signals, biasing prediction toward global similarity; (C3)~naïve value encodings overfit to corpus-specific distributions, causing cross-domain degradation.
Hence, we propose \Our{}, a scalable and domain-robust overlap ratio estimator built on three principles: (P1)~explicitly represent row--column structure; (P2)~expose inter-table alignment signals during training without expensive alignment search; (P3)~reduce sensitivity to corpus-specific value distributions. \Our{} instantiates these principles with a Two-View Row--Column Hypergraph encoder, alignment-guided objectives with inexpensive interaction signals, and a domain-robust value mapping. Experiments on multiple datasets spanning diverse domains and scales, including a large real-world corpus beyond prior benchmarks, show that \Our{} outperforms the state of the art. \Our{} reduces MAE by up to 55\% overall and 69\% in zero-shot transfer, while achieving up to $89\times$ speedup. We further validate its effectiveness for query-by-table retrieval.

\end{abstract}

\maketitle

\section{Introduction}

We study \emph{table overlap ratio estimation}~\cite{pugnaloni2025table}: given two tables, estimate the \emph{size of their largest overlap}. Intuitively, this is the number of cells in the largest common rectangular subtable that the two tables can share exactly in value. This subtable is obtained by reordering rows and columns through injective (one-to-one) row and column alignments. We refer to this size, normalized by the number of cells in the smaller table, as the \emph{overlap ratio}.

\revfour{Consider a workflow of repository search and curation over tables of property sales. A user provides a target table of property sale records and asks the system to find duplicate or near-duplicate tables in a large repository collected from public agencies, listing portals, and archived snapshots. Most candidates are unrelated, even though many may appear superficially similar. Relevant candidates may contain the same sale records with rows and columns reordered, cover only a subset of the target because they were extracted over different time windows, or have missing and inconsistent headers. These candidates are difficult to identify from metadata alone, and exact computation over all candidates is too expensive. The system therefore needs a fast filtering step that retains likely reordered copies and partial extracts, while discarding unrelated tables before expensive exact verification.}

\revfour{The above illustrates why overlap ratio estimation is a useful primitive in large table repositories~\cite{pugnaloni2025table}. A fast estimator can reduce a very large candidate space to a small set of candidate tables that share substantial content.} These candidates can then be handled by more expensive downstream processing, such as exact overlap computation~\cite{zecchini2024determining}, table reclamation~\cite{fan2024gent}, deduplication~\cite{koch2023duplicate}, or related table discovery~\cite{das2012finding}. The same estimate supports query-by-table retrieval~\cite{bleifuss2021structured, bleifuss2021secret, hulsebos2023gittables}, where a user provides a query table and the system retrieves or filters tables by overlap ratio. It also facilitates versioning and evolution analysis by quantifying content changes across snapshots~\cite{zecchini2024determining, bleifuss2021structured}. Crucially, these settings cannot assume reliable schema cues, metadata, or pre-aligned row/column organization: headers are frequently inconsistent or missing~\cite{adelfio2013schema, cafarella2008webtables}, and about 20\% of Web tables lack identifiable headers~\cite{pimplikar2012answering}. Figure~\ref{fig:example} summarizes these three use cases.

\smallskip\noindent\textbf{Existing Solutions and Open Challenges.}
Computing exact overlap ratio requires finding the \emph{largest overlap}, a combinatorial row/column alignment problem that is NP-hard~\cite{zecchini2024determining}. \Sloth{}~\cite{zecchini2024determining} solves it exactly but is prohibitively slow at scale, \revthree{taking days} for 100k pairs and hours per query over 10k tables~\cite{zecchini2024determining,pugnaloni2025table}. 
Motivated by this, \Arma{}~\cite{pugnaloni2025table} is the first work that proposes a learning-based estimator and remains the state of the art. Essentially, it encodes each table as a graph, embeds them independently, and estimates overlap ratio from the similarity between the two embeddings. By replacing expensive alignment search with embedding similarity, \Arma{} improves efficiency drastically.

In practice, however, overlap ratio estimation must work in heterogeneous repositories. Tables come from diverse sources with different layouts, distributions, and noise. Obtaining overlap labels for new table pairs is expensive because it requires solving combinatorial alignments. Consequently, retraining for every new corpus is often impractical. This makes overlap ratio estimation challenging. We want an estimator to remain accurate not only in-domain, but also when deployed on unseen repositories, without requiring new labels. To meet \revfour{these requirements}, an estimator must (i)~represent the alignment-dependent \revfour{row--column structure} that defines overlap. \revfour{We use row--column structure to refer to the organization of cells by their row and column memberships, together with the requirement that valid overlap must preserve both memberships under a joint row and column alignment.} It must also (ii)~incorporate pair-specific correspondence signals without expensive alignment search, and (iii)~remain robust across repositories with different content. Motivated by this setting, we identify three open challenges:

\noindent$\bullet$ \textbf{C1: Standard graphs underexpress joint row--column structure.}
Overlap depends on a \emph{joint} row--column alignment, but \Arma{}’s graph connects rows and columns to cells through pairwise edges, leaving \revfour{this joint structure} to be inferred indirectly.

\noindent$\bullet$ \textbf{C2: Independent encoding biases the model toward global similarity.} Embedding tables independently provides no explicit channel to model which parts of $T_1$ should align to which parts~of~$T_2$.

\noindent$\bullet$ \textbf{C3: Value distribution shift causes cross-domain degradation.}
Value vocabularies and frequencies vary across corpora, and naïve encoding overfits to corpus-specific patterns (over $3.7\times$ higher error in zero-shot transfer in Section~\ref{subsec:exp-crossdomain}).

\begin{figure}[t]
    \centering
    \includegraphics[width=0.47\textwidth]{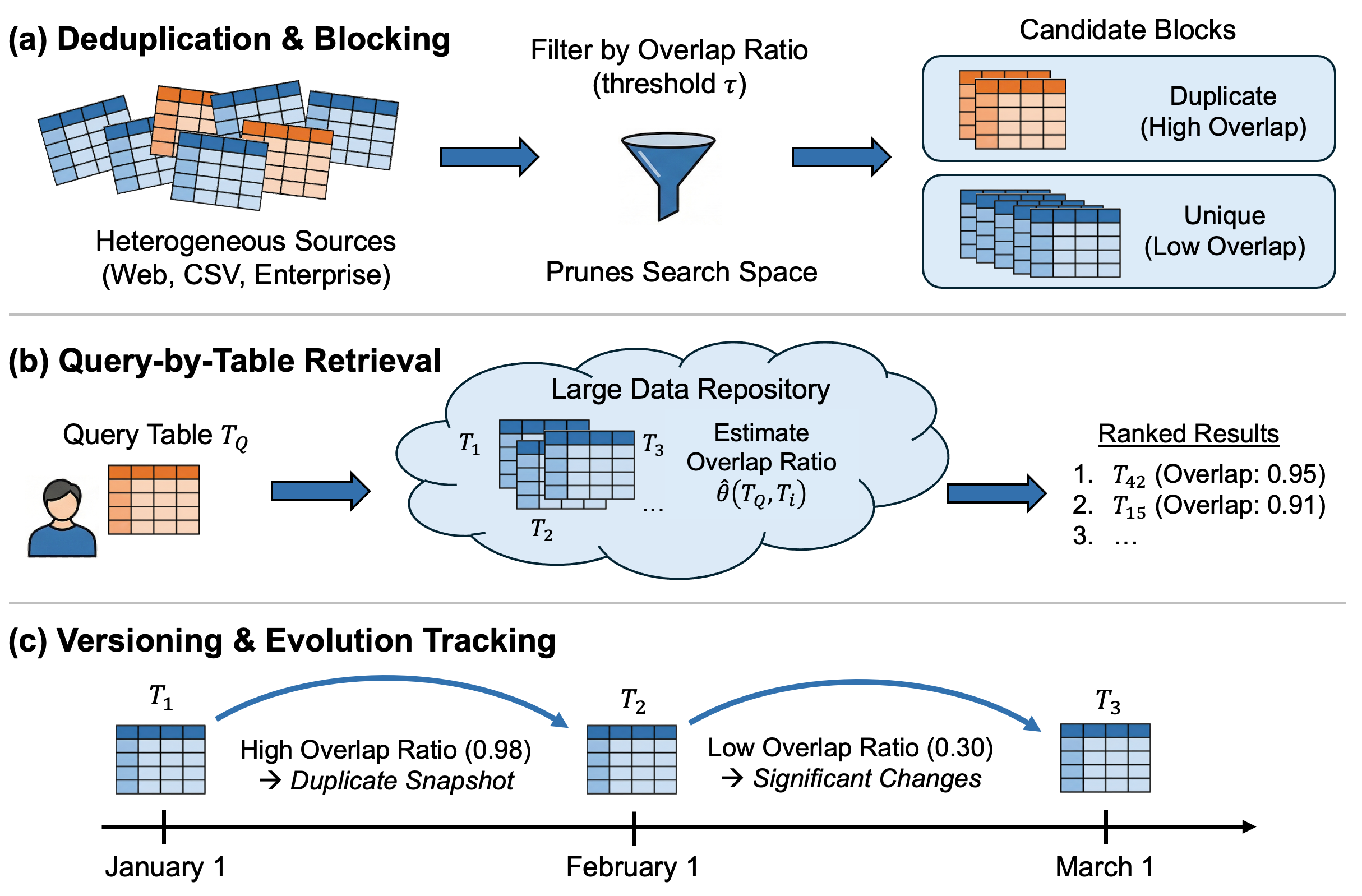}
    \caption[]{Use cases of overlap ratio estimation.}
    \label{fig:example}
\end{figure}

\smallskip\noindent\textbf{Our Solution.}
To address C1--C3, we propose \Our{} (\uline{AL}ignment-guided \uline{O}verlap \uline{R}atio \uline{E}stimator), an interaction-aware model for scalable and domain-robust overlap ratio estimation. \Our{} is designed around three principles:

\noindent$\bullet$ \textbf{P1: Structure-expressive encoding.}
To make \revfour{row--column structure} explicit (C1), we model each table as a Two-View Row--Column Hypergraph with row and column hyperedges over cell nodes. We then encode this  with a structure-aware encoder to produce embeddings that preserve these structural constraints.

\noindent$\bullet$ \textbf{P2: Inter-table alignment-guided learning.}
To expose correspondence signals that independent encoding misses (C2), we introduce a multi-granularity alignment regularizer that encourages row- and column-level correspondences during training, alongside the overlap regression loss. At prediction time, we fuse the learned embeddings with lightweight interaction features that capture pair-specific overlap signals, while keeping inference efficient. 

\noindent$\bullet$ \textbf{P3: Domain-robust value mapping.}
To improve zero-shot transfer under value distribution shift (C3), we map cell values into discrete buckets and train \Our{} to be invariant to random relabelings of bucket indices via a consistency regularizer. This reduces reliance on corpus-specific value frequencies while preserving exact value equality within each table pair.

\smallskip
Our contributions are as follows:
\begin{itemize}[leftmargin=*]
    \item We identify three challenges for overlap ratio estimation: underexpressed \revfour{row--column structure}, missing inter-table alignment signals under independent encoding, and value distribution shift across domains (Section~\ref{subsec:challenges}). These observations motivate three design principles that shape \Our{} (Section~\ref{subsec:principles}).
    
    \item We propose a Two-View Row--Column Hypergraph with a dedicated encoder that produces structure-preserving embeddings capturing joint \revfour{row--column structure} (Sections~\ref{subsec:mtd-hypergraph}~and~\ref{subsec:mtd-encoder}).
    
    \item We develop an \iep{} pipeline with alignment-guided objectives that incorporate inter-table interaction signals without requiring exact alignment: \interact{} computes pairwise signals, \encode{} produces hypergraph-based representations, and \predict{} fuses them to regress overlap ratio. (Section~\ref{subsec:mtd-pred}).
    
    \item We introduce stochastic bucket permutations and a consistency regularizer to improve domain robustness (Sections~\ref{subsec:mtd-valuemap} and~\ref{subsec:mtd-obj}).

    \item We provide theoretical analysis showing that \Our{} (i) is invariant to row and column permutations, as required by the overlap definition; (ii) captures full row and column context efficiently within each encoder layer; and (iii) provides domain robustness by enforcing invariance to index relabeling (Section~\ref{sec:theory}).

    \item We evaluate on three datasets spanning multiple domains and scales, including a large real-world corpus beyond prior benchmarks. Across in-domain and cross-domain settings, \Our{} consistently outperforms the state of the art, reducing MAE by up to 55\% overall and 69\% in zero-shot transfer, while achieving up to $89\times$ speedup. We further validate its effectiveness for query-by-table retrieval under both ranking and threshold-based retrieval (Section~\ref{sec:exp}).
\end{itemize}

\section{Preliminaries}\label{sec:preliminaries}

We begin by introducing the definitions of table overlap and overlap ratio, followed by the formal definition of the overlap ratio estimation task. Then, we briefly review the current state of the art.

\subsection{Problem Formulation}

A \emph{table} $T$ is a two-dimensional structure with $m$ rows and $n$ columns. Let $I = \{1, \dots, m\}$ and $J = \{1, \dots, n\}$ be the sets of row and column indices, respectively. The value stored in the cell located at row $i \in I$ and column $j \in J$ is denoted by $c_{i,\,j}$. The size of the table is $|T| = mn$. \revfour{For two tables $T_1$ and $T_2$, we use $c^{(1)}_{i,\,j}$ and $c^{(2)}_{i,\,j}$ to denote cell values in $T_1$ and $T_2$, respectively.}

\smallskip\noindent\textbf{Overlap Between Two Tables.}
A key property of tables is that their semantics are invariant to row and column order. Accordingly, overlap is defined with respect to an \emph{alignment} between their indices~\cite{zecchini2024determining}. Consider two tables $T_1$ and $T_2$. An \emph{alignment} $\mathcal{A}$ between $T_1$ and $T_2$ is a pair of injective mappings $\mu : I'_1 \to I'_2,\ \nu : J'_1 \to J'_2$, where $I'_1 \subseteq I_1,\ I'_2 \subseteq I_2$ and $J'_1 \subseteq J_1,\ J'_2 \subseteq J_2$ satisfy $|I'_1|=|I'_2|$ and $|J'_1|=|J'_2|$. Intuitively, $\mu$ selects and reorders a subset of rows of $T_1$ to match a subset of rows of $T_2$, and $\nu$ does the same for columns respectively. 
Given $\mathcal{A} = (\mu, \nu)$, the \emph{overlap} induced by $\mathcal{A}$ is the set of matched cells with \emph{identical values}, i.e., the same string, number, or null: $O_{\mathcal{A}} = \big\{ (i,\,j) \in I'_1 \!\times\! J'_1 \;\big|\; c^{(1)}_{i,\,j} = c^{(2)}_{\mu(i),\,\nu(j)} \big\}$. The cardinality $|O_{\mathcal{A}}|$ is the overlap size under $\mathcal{A}$.

\revfour{Our overlap definition is based on exact cell equality. Thus, a pair of null values is counted as a match, just as two identical strings or numbers are. In some applications, however, overlap may be defined only over observed values, in which case pairs of null values should be excluded. More refined definitions are possible when nulls have different meanings. For example, a null may denote an inapplicable attribute, where matching nulls is reasonable. In other cases, it may indicate the value is unknown, where matching nulls may be misleading. Distinguishing these cases requires dataset-specific null semantics and labeling rules, which are beyond the scope of this paper.}

\smallskip\noindent\textbf{Largest Overlap.}
Let $\mathcal{A}^*$ be \revfour{an optimal alignment} that maximizes the number of matched cells, i.e., $\mathcal{A}^* \in \arg\max_{\mathcal{A}} |O_{\mathcal{A}}|$. We define the \emph{largest overlap} as the corresponding overlap $O^* = O_{\mathcal{A}^*}$. Intuitively, $O^*$ is \revfour{a maximum-size} common subtable obtainable between $T_1$ and $T_2$ after optimally reordering their rows and columns.

\smallskip\noindent\textbf{Overlap Ratio.}
To compare tables of differing sizes, we normalize the size of the largest overlap by the area of the smaller table~\cite{pugnaloni2025table}. The resulting \emph{overlap ratio} $\theta$ is defined as $\theta(T_1,\ T_2) = \frac{|O^*|}{\min(|T_1|,\,|T_2|)},\ \theta \in [0,1]$.

\smallskip\noindent\textbf{Problem Definition.}
We now formalize overlap ratio estimation as a regression problem.
\begin{definition}[Overlap Ratio Estimation]
    Given two tables $T_1$ and $T_2$, let $\theta$ be their ground-truth overlap ratio. The task is to learn a parameterized model $M_{\phi}$ that predicts $\hat{\theta}=M_{\phi}(T_1,T_2)$ to approximate $\theta$.
    Given a training set of labeled pairs $\mathcal{D}=\{(T_1, T_2, \theta)\}$, \revfour{the learning objective is to obtain parameters $\phi^*$} that minimize the prediction error, i.e.,   $\phi^*=\arg\min_{\phi}\sum_{(T_1,T_2,\theta)\in\mathcal{D}}\ell\!\left(M_{\phi}(T_1,T_2),\,\theta\right)$,
    where $\ell(\cdot,\cdot)$ measures the error between $\hat{\theta}$ and $\theta$.
\end{definition}

\subsection{State of the Art}

The state of the art in learning-based overlap ratio estimation is \Arma{}~\cite{pugnaloni2025table}. It learns table embeddings whose cosine similarity approximates the overlap ratio, thereby addressing the scalability limitations of exact solver. \Arma{} follows a design paradigm that we characterize as the \textbf{\ec{}} 
paradigm:

\noindent\textbf{(1) \encode{}.} Each table is independently mapped to a tripartite graph with row, column, and cell nodes. Undirected edges connect cell nodes to their row and column nodes. A graph neural network (GNN), GraphSAGE~\cite{hamilton2017inductive}, is employed to refine the node embeddings and a mean pooling readout produces a single embedding vector as the final table representation.

\noindent\textbf{(2) \compare{}.} Given two tables, the model approximates their overlap ratio by computing the cosine similarity between their respective embeddings. This similarity score is treated as the predicted overlap ratio $\hat{\theta}$.

This architecture allows for efficient estimation, as embedding a table requires only linear-time graph construction, and the final estimation reduces to a simple vector similarity computation. 

\section{Design Rationale for Overlap Ratio}

\begin{figure*}[t]
    \centering
    \includegraphics[width=0.99\textwidth]{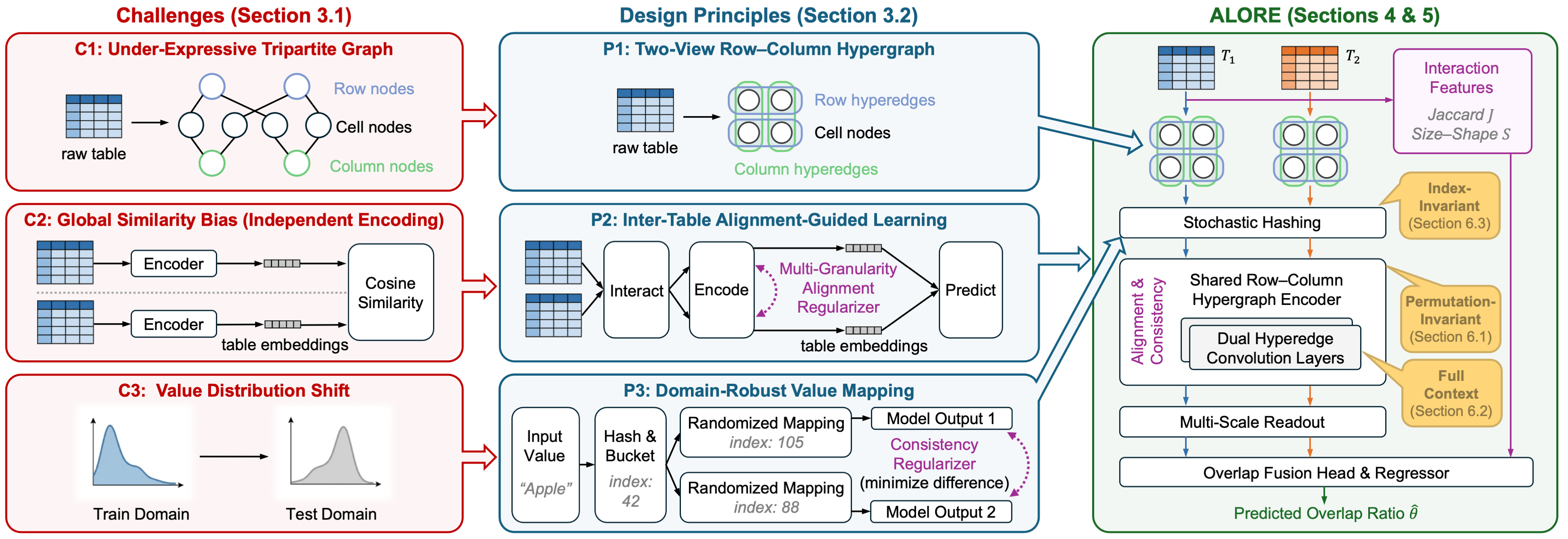}
    \caption[]{Overview of \Our{}: challenges in overlap estimation (left), design principles (middle), \Our{} architecture (right).}
    \label{fig:method}
\end{figure*}

The overlap ratio is defined by the optimal joint alignment of rows and columns across two tables. This makes overlap ratio estimation fundamentally different from global table similarity. Under this definition, two tables share content only when subsets of rows and columns can be jointly aligned into a common subtable. However, prevailing \ec{} estimator embeds each table independently and compares the resulting embeddings, which biases learning toward global similarity rather than alignment evidence. Figure~\ref{fig:method} provides the roadmap for this section: the left panel summarizes the three challenges that follow from this mismatch (Section~\ref{subsec:challenges}), the middle panel distills the corresponding design principles (Section~\ref{subsec:principles}), and the right panel shows how \Our{}, our overlap ratio estimator, realizes them end-to-end (Section~\ref{subsec:alore_overview}).

\subsection{Challenges}\label{subsec:challenges}

As illustrated in Figure~\ref{fig:method} (left), the \ec{} paradigm induces a bias toward global similarity, whereas overlap is driven by whether subsets of rows and columns can be aligned into correspondence. In particular, its tripartite graph representation and independent encoding make \revfour{row--column structure} and pair-specific correspondences only indirectly observable. This gap is amplified in cross-domain settings, where value distributions shift substantially. We next characterize three concrete challenges that follow.

\smallskip\noindent\textbf{Challenge 1:~Tripartite graphs underexpress joint row--column structure.} 
\Arma{} represents a table as a tripartite graph. While this faithfully captures tabular adjacency and separate row and column membership signals, it does not make joint \revfour{row--column structure} explicit. A cell contributes to the largest overlap only when its row and column can both be aligned across two tables. In the tripartite graph, this joint row--column dependency must be inferred indirectly through multi-hop message passing. 

This indirect inference creates a learning challenge. Each row and column node aggregates information from all of its incident cells. As tables grow, many distinct row--column contexts are blended into shared row and column embeddings. It becomes increasingly difficult for the encoder to preserve which specific row–column combinations consistently co-occur with the same values and could jointly participate in an overlapping region. The resulting representation captures how rows and columns behave individually, but it is not designed to preserve the higher-order row–column structure that determines which subsets of rows and columns can be permuted into a large aligned subtable.

\smallskip\noindent\textbf{Challenge 2:~Independent encoding biases the model toward global similarity.}
\Arma{} embeds each table independently and derives overlap ratio through a similarity function. As a result, the model never sees the two tables jointly, and thus cannot adapt its representations to the specific alignment between them. This has two consequences: (i)~\emph{No explicit value correspondences}. The model does not observe which value occurrences across the two tables tend to participate in optimal permutations; (ii)~\emph{Limited structural compatibility signals}. The model cannot directly assess whether subsets of rows and columns in one table are compatible with subsets in the other, because each encoding is computed in isolation.

True overlap is defined by a \emph{maximal aligned structure}. This depends on localized correspondences (e.g., a subset of matching rows/columns), not just global similarity. Under independent encoding with global pooling, such fine-grained evidence is diluted. For example, a single highly overlapping column may dominate the true overlap, yet its signal is averaged with many unrelated cells when forming a single table embedding. Conversely, two tables may appear globally similar in value statistics but have incompatible row–-column arrangements, leading to small true overlap despite high embedding similarity.
This behavior mirrors a well-known limitation of bi-encoders in retrieval and matching tasks~\cite{reimers2019sentence}, in contrast to cross-encoders that perform interaction across pairs~\cite{vaswani2017attention, khattab2020colbert}. \Arma{} therefore tends to emphasize global content similarity, whereas overlap ratio estimation requires alignment-aware evidence about which parts of the two tables can be put into correspondence.

\smallskip\noindent\textbf{Challenge 3:~Value distribution shift causes cross-domain degradation.} 
\Arma{} initializes cell features using SHA-256 hashing of raw cell values, and then refines them via GraphSAGE layers. In principle, hashing is domain-agnostic. Equal values map to equal hashes, and the semantics of the value do not matter for overlap. In practice, the learned embeddings capture the distribution of the training corpus.

Different corpora often exhibit distinct distributions (e.g., tables dominated by strings or floats). When hashed values are propagated and aggregated via global pooling, the encoder inevitably learns these corpus-specific distributions. As a result, the learned embedding space becomes specific to the training domain rather than capturing domain-invariant signals needed for alignment-based overlap ratio estimation.
Empirically, this manifests as substantial cross-domain degradation. When trained on one corpus and evaluated on the other, \Arma{}'s MAE triples, and in some settings it is outperformed by simple Jaccard-based baselines that ignore tabular structure. This suggests that the model conflates overlap with corpus-specific value distributions instead of focusing on domain-invariant structural and relational cues.

\subsection{Design Principles}\label{subsec:principles}
Figure~\ref{fig:method} (middle) summarizes three principles that address Challenges~1--3, which we detail next.

\smallskip\noindent\textbf{Principle 1:~Structure-Expressive Encoding via a Two-View Hypergraph.}
Challenge~1 shows that the tripartite graph exposes row and column information only as separate, pairwise signals, leaving joint row–column structure to be reconstructed indirectly. We therefore introduce a Two-View Row--Column Hypergraph, in which each row and each column is modeled as a hyperedge over cell nodes. Row and column hyperedges form two complementary ``views'' of the same set of cells. This design follows the broader insight that hypergraph neural networks more naturally capture higher-order group structure than pairwise GNNs~\cite{chen2023hytrel, feng2019hypergraph, yadati2019hypergcn}, but we specialize it to overlap ratio estimation. Our two-view row--column hypergraph preserves the multi-cell behavior of rows and columns that is stable under permutations, so that the encoder can more easily identify combinations of rows and columns that can support a joint alignment between tables.

\smallskip\noindent\textbf{Principle 2:~Inter-Table Alignment-Guided Learning.}
Challenge~2 stems from the \ec{} paradigm, since encoding each table independently and comparing only at the end deprives the encoder of any inter-table alignment signals. Motivated by this, we move from \ec{} to an \textbf{\iep{}} paradigm and make the encoder explicitly \emph{alignment-guided}. We retain an efficient per-table encoder (the Row--Column Hypergraph Encoder), but train it on table pairs with an overlap regression loss and a multi-granularity alignment regularizer. These pairwise objectives evaluate how well rows, columns, and shared values across the two tables can be aligned and send gradients back through the encoder. At prediction time, an overlap fusion head combines the two table embeddings with a small set of inter-table signals before regressing the overlap ratio. In combination, this introduces lightweight inter-table interaction while preserving bi-encoder efficiency, directly addressing the structural limitation of the original \ec{}'s design.

\smallskip\noindent\textbf{Principle 3:~Domain-Robust Value Mapping.}
\revfour{Challenge~3 shows that value representations should preserve exact equality without overfitting to corpus-specific value distributions. Inspired by domain randomization and invariant learning~\cite{arjovsky2019invariant, tobin2017domain, peng2018sim}, we treat cell values as symbolic identities: equal values must remain equal, since overlap is defined by exact cell value matches, but the model should not learn that particular values are characteristic of a training corpus. We therefore hash values into embedding indices and randomly relabel these indices during training. Each relabeling preserves equality within a table pair, so exact matches remain observable, while preventing any index from carrying domain-specific information. A consistency regularizer further encourages predictions to remain stable under such relabelings. Together, these mechanisms discourage memorization of training-domain value distributions and bias the model toward row/column co-occurrence and alignment patterns that are more stable under domain shift.}

\subsection{Overview of \Our{}}\label{subsec:alore_overview}

Figure~\ref{fig:method} (right) instantiates Principles~1--3 in \Our{}. In contrast to the \ec{} paradigm designed for global similarity, we organize \Our{} as an \iep{} pipeline tailored to overlap ratio estimation. This clarifies where inter-table interaction enters: through inexpensive interaction features at input time and through alignment-guided objectives during training, while inference remains a bi-encoder-style forward pass.

\noindent\textbf{(1) \interact{}.} We introduce inter-table interaction through lightweight features computed from the raw table pair, including a Jaccard-based value overlap score and size--shape descriptors derived from table dimensions. These features provide heuristic guidance for predicting the overlap ratio, while the alignment-guided objectives in the \predict{} stage provide complementary interaction during training, without computing an explicit alignment.

\noindent\textbf{(2) \encode{}.} Each table is transformed to a Two-View Row--Column Hypergraph. A shared Row--Column Hypergraph Encoder then processes this structure using stacked dual hyperedge convolution layers and a multi-scale table readout, producing a table embedding from each encoder layer. A domain-robust value mapping initializes cell embeddings using hash-based indices and stochastic permutations during training, encouraging the encoder to focus on structural compatibility rather than corpus-specific value identities.

\noindent\textbf{(3) \predict{}.} An overlap fusion head combines the two multi-scale table embeddings with the interaction features. Finally, an overlap ratio regressor outputs the final estimated overlap ratio. Training uses an overlap regression loss as the main objective, a multi-granularity alignment regularizer that sends pairwise alignment signals back into the encoder, and a consistency regularizer that stabilizes predictions under randomized value mappings.

We instantiate these principles by formalizing hypergraph encoding with domain-robust value mapping (Section~\ref{sec:mtd-encoding}) and interaction-aware prediction with alignment-guided objectives (Section~\ref{sec:mtd-training}).

\section{Domain-Robust Hypergraph Encoding}\label{sec:mtd-encoding}

\renewcommand{\algorithmcfname}{Procedure}
\begin{algorithm}[t]
    \caption{Training Step of \Our{}}\label{algo:train}
    \SetKwInOut{Input}{Input}
    \SetKwInOut{Output}{Output}
    \SetKwFunction{FTrain}{\textbf{TrainStep}}
    \SetKwFunction{FFwd}{\textbf{Forward}}
    \SetKwProg{Fn}{}{:}{}
    \small
    \Input{A pair of tables $(T_1,T_2)$, a ground-truth overlap ratio $\theta$, the number of hash buckets $B$, a gating threshold and exponent $(\kappa,\gamma)$, and a set of weights $(\lambda_{\mathrm{RC}},\lambda_{\mathrm{val}},\lambda_{\mathrm{ctx}},\lambda_{\mathrm{cons}})$}
    \Output{Training loss $\mathcal{L}_{\mathrm{total}}$}

    \smallskip
    \revthree{
    \Fn{\FFwd{$T_1,T_2,I,\pi$}}{
        $(H_1,Z_1) \leftarrow \text{Encode}(T_1,\pi)$; \commt{with stochastic value mapping}\\ \nllabel{ln:enc1}
        $(H_2,Z_2) \leftarrow \text{Encode}(T_2,\pi)$\; \nllabel{ln:enc2}
        $\hat{\theta} \leftarrow \text{Predict}(|Z_1-Z_2|, I)$\; \nllabel{ln:pred}
        \Return{$(\hat{\theta},H_1,H_2,Z_1,Z_2)$}\;
    }
    }
    \smallskip
    \Fn{\FTrain{$T_1,T_2,\theta$}}{
        $\pi_1 \leftarrow \text{UniformRandomPerm}(\{0,\dots,B{-}1\})$\; \nllabel{ln:perm1}
        $\pi_2 \leftarrow \text{UniformRandomPerm}(\{0,\dots,B{-}1\})$\; \nllabel{ln:perm2}
        
        $I \leftarrow [\,S(T_1,T_2)\parallel J(T_1,T_2)\,]$; \commt{interaction features}\\ \nllabel{ln:interact}
        
        $(\hat{\theta}_1,H_1,H_2,Z_1,Z_2) \leftarrow \FFwd(T_1,T_2,I,\pi_1)$\; \nllabel{ln:fwd1}
        $(\hat{\theta}_2,\_,\_,\_,\_) \leftarrow \FFwd(T_1,T_2,I,\pi_2)$; \commt{for consistency}\\ \nllabel{ln:fwd2}

        $\mathcal{L}_{\mathrm{main}} \leftarrow |\hat{\theta}_1-\theta|$\; \nllabel{ln:main}
        $W_{\mathrm{gate}} \leftarrow \max\!\left(0,\frac{\theta-\kappa}{1-\kappa}\right)^{\gamma}$\; \nllabel{ln:gate}

        $\mathcal{L}_{\mathrm{align}} \leftarrow 0$\;
        \If{$W_{\mathrm{gate}} > 0$}{
            $\mathcal{L}_{\mathrm{RC}} \leftarrow \text{RowColumnAlign}(H_1,H_2)$\; \nllabel{ln:rc}
            $\mathcal{L}_{\mathrm{val}} \leftarrow \text{ValueAlign}(H_1,H_2)$\; \nllabel{ln:val}
            $\mathcal{L}_{\mathrm{ctx}} \leftarrow \text{ContextAlign}(H_1,H_2,Z_1,Z_2)$\; \nllabel{ln:ctx}
            $\mathcal{L}_{\mathrm{align}} \leftarrow W_{\mathrm{gate}}\!\left(\lambda_{\mathrm{RC}}\mathcal{L}_{\mathrm{RC}}+\lambda_{\mathrm{val}}\mathcal{L}_{\mathrm{val}}+\lambda_{\mathrm{ctx}}\mathcal{L}_{\mathrm{ctx}}\right)$\; \nllabel{ln:align}
        }

        $\mathcal{L}_{\mathrm{cons}} \leftarrow \|\hat{\theta}_1-\hat{\theta}_2\|_2^2$\; \nllabel{ln:cons}
        $\mathcal{L}_{\mathrm{total}} \leftarrow \mathcal{L}_{\mathrm{main}} + \mathcal{L}_{\mathrm{align}} + \lambda_{\mathrm{cons}}\mathcal{L}_{\mathrm{cons}}$\; \nllabel{ln:total}
        Update parameters using $\nabla \mathcal{L}_{\mathrm{total}}$\;
        \Return{$\mathcal{L}_{\mathrm{total}}$}\; \nllabel{ln:update}
    }
\end{algorithm}

\revthree{Figure~\ref{fig:method} (right) summarizes \Our{}'s architecture. Procedure~\ref{algo:train} presents the end-to-end training step as a roadmap for Sections~\ref{sec:mtd-encoding} and~\ref{sec:mtd-training}. This section explains the \encode{} stage through its three main components: 1) the Two-View Row--Column Hypergraph that represents each table; 2) the shared Row--Column Hypergraph Encoder that produces structure-aware embeddings; 3) the domain-robust value mapping used to initialize cell features and improve generalization across corpora. Section~\ref{sec:mtd-training} then presents the \interact{} and \predict{} stages, including the pairwise interaction features, prediction head, and alignment-guided training objectives.}

\subsection{Two-View Row--Column Hypergraph}\label{subsec:mtd-hypergraph}

A central challenge in overlap ratio estimation is to capture how subsets of rows and columns can be permuted into an aligned subtable. To expose this higher-order structure explicitly, we model each table $T$ with \revthree{$m$ rows and $n$ columns} as a Two-View Row--Column Hypergraph $\mathcal{H}(T) = (\mathcal{V}, \mathcal{E}_{\mathrm{row}} \cup \mathcal{E}_{\mathrm{col}})$, where nodes correspond to individual cells and hyperedges correspond to entire rows and columns, \revthree{as part of the encode step in Procedure~\ref{algo:train} (lines~\ref{ln:enc1}--\ref{ln:enc2}).}

\smallskip\noindent\textbf{Nodes and Features.}
Each cell $c_{i,\,j}$ is mapped to a node $v_{ij} \in \mathcal{V}$. We apply SHA-256 to the raw cell value $c_{i,\,j}$ to initialize node feature. This feature is used only as an identity surrogate. Unlike semantic word embeddings~\cite{devlin2019bert, liu2019roberta}, this hashing scheme is content-agnostic and preserves exact equality, matching the definition of overlap.

\smallskip\noindent\textbf{Row/Column Hyperedges.}
To expose the inherent two-dimensional groupings of rows and columns, we introduce two families of hyperedges: the \emph{row hyperedges} $\mathcal{E}_{\mathrm{row}}=\{e^r_i \mid e^r_i=\{v_{i,1},\dots,v_{i,n}\}\}$ and the \emph{column hyperedges} $\mathcal{E}_{\mathrm{col}}=\{e^c_j \mid e^c_j=\{v_{1,j},\dots,v_{m,j}\}\}$. Each row hyperedge $e^r_i$ connects all cells in row $i$, and each column hyperedge $e^c_j$ connects all cells in column $j$. As a result, every cell node participates in exactly two hyperedges, one row and one column, which exposes joint row--column structure explicitly.

This construction gives the encoder direct access to full row- and column-level context within a \emph{single} hypergraph convolution, rather than reconstructing it from multiple hops over cell--row and cell--column binary edges. It also matches the inductive bias of overlap. Permuting rows or columns does not change $\mathcal{E}_{\mathrm{row}}$ or $\mathcal{E}_{\mathrm{col}}$, so the hypergraph topology is preserved under any reordering consistent with the overlap definition. Combined with symmetric hypergraph convolutions and permutation-invariant pooling in the Row--Column Hypergraph Encoder (Section~\ref{subsec:mtd-encoder}), the resulting table embedding is invariant to row and column permutations.

Finally, the dual membership of each cell enables efficient propagation along both axes. A single Dual Hyperedge Convolution layer can pass information across all rows and columns, enabling the encoder to capture row--column combinations associated with consistent value patterns. In contrast, exposing the same signal with a tripartite graph typically requires several hops of pairwise message passing and is more susceptible to oversmoothing~\cite{xu2019how, li2018deeper}.

\smallskip\noindent\uline{Complexity}.
Constructing $\mathcal{H}(T)$ requires a single pass over all cells to create nodes, compute hashes, and assign hyperedge memberships. The resulting time and space complexity is $\mathcal{O}(|T|)$, comparable to building \Arma{}'s tripartite graph while offering a structure more conducive to alignment. The resulting incidence structure can be stored in sparse format for efficient batching.

\subsection{Row–Column Hypergraph Encoder}\label{subsec:mtd-encoder}

\Our{} employs a Row--Column Hypergraph Encoder to learn structure-aware cell embeddings from the two-view hypergraph (lines~\ref{ln:enc1}--\ref{ln:enc2}). The encoder is shared between the two tables, forming a Siamese architecture~\cite{bromley1993signature}. Crucially, this does not require cross-attention between all cell pairs, which would be computationally prohibitive for large tables. Unlike previous approaches that inject interaction signals directly into the encoding process, we maintain a pure structural encoder to learn domain-invariant features.

\smallskip\noindent\textbf{Dual Hyperedge Convolution Layer.}
The encoder stacks $L$ \emph{dual hyperedge convolution layers}. Let $H^{(l)} \in \mathbb{R}^{|\mathcal{V}| \times d}$ denote the node \ embeddings after layer $l$, where $d$ is the embedding dimension. Given $H^{(l-1)}$, the $l$-th layer applies two independent hypergraph convolutions~\cite{gao2023hgnn} to obtain $H^{(l)}_{\mathrm{row}} = \mathrm{HConv}(H^{(l-1)}, \mathcal{E}_{\mathrm{row}})$ and $H^{(l)}_{\mathrm{col}} = \mathrm{HConv}(H^{(l-1)}, \mathcal{E}_{\mathrm{col}})$. The two views are then fused:
$H^{(l)}=\penalty-10000
\operatorname{ReLU}\bigl(
\operatorname{LayerNorm}\bigl(
\operatorname{Linear}\bigl(
[H^{(l)}_{\mathrm{row}}\mathbin{\parallel}H^{(l)}_{\mathrm{col}}]
\bigr)\bigr)\bigr)$,
where $[\cdot \parallel \cdot]$ denotes concatenation. This design ensures that each cell update is informed by both its row and column context. The layer is permutation-invariant to row/column orderings and runs in time linear in the number of cell--hyperedge incidences.

\smallskip\noindent\textbf{Multi-Scale Table Readout.}
To capture structure at different depths, the encoder exposes a \emph{multi-scale table readout}. At each layer $l \in {1,\dots,L}$, an inner projection first applies a lightweight MLP to the node embeddings $H^{(l)}$. We then aggregate the projected nodes with permutation-invariant global sum pooling to obtain a layer-wise table vector $z^{(l)}_{\mathrm{pool}}$. An outer projection finally refines $z^{(l)}_{\mathrm{pool}}$ via another MLP, yielding the readout $z^{(l)}$. We use sum pooling since it preserves multiplicities and yields a more expressive multiset readout than averaging~\cite{xu2019how}. The final table representation concatenates all layer-wise vectors and normalizes them, i.e., $Z = \text{LayerNorm}([z^{(1)} \parallel \cdots \parallel z^{(L)}])$. This multi-scale representation serves two roles. First, the final-layer embedding dominates the main prediction task, giving the overlap fusion head access to a rich summary. Second, the intermediate embeddings $H^{(1)},\dots,H^{(L)}$ together with readouts $z^{(l)}$ parameterize our multi-granularity alignment regularizer (Section~\ref{subsec:mtd-obj}), which enforces alignment at row, column, and value levels.

\smallskip\noindent\uline{Complexity}.
Each dual hyperedge convolution layer processes every cell once in its row hyperedge and once in its column hyperedge, so the cost per layer is $\mathcal{O}(|T| d)$ for embedding dimension $d$. With $L$ layers, the Row--Column Hypergraph Encoder runs in $\mathcal{O}(L |T| d)$ time and $\mathcal{O}(|T| d)$ space, matching the asymptotic order of GNNs on \Arma{}’s tripartite graph.

\subsection{Domain-Robust Value Mapping}\label{subsec:mtd-valuemap}

Even with symbolic hashing, a fixed embedding table can leak corpus-specific frequency patterns into the learned space. This hurts transfer because the estimator may partially rely on which hashed buckets are common in the training corpus rather than on alignment-relevant structure. Principle~3 addresses this by randomizing the association between hashed values and embedding indices during training, while preserving exact value equality within each forward pass.

\smallskip\noindent\textbf{Bucketization and Stochastic Permutation.}
\revthree{Let $B$ denote the number of hash buckets, which fixes the size of the value-embedding table. Each hashed feature $x_{i, j}$ is mapped into a fixed index range $[0, B-1]$ via a modulo operation $\mathrm{idx}_{i, j} = x_{i, j} \bmod B$, which defines a bounded embedding table. In \texttt{Forward}, the encoder uses a supplied permutation $\pi$ and performs embedding lookup with the permuted indices $\pi(\mathrm{idx}_{i,j})$ during encoding (lines~\ref{ln:enc1}--\ref{ln:enc2}). During training, \texttt{TrainStep} draws two independent random permutations over $\{0,\dots,B-1\}$ for the two forward passes (lines~\ref{ln:perm1}--\ref{ln:perm2}).}
This mechanism preserves the equality relation within a training step: \emph{two cells share an embedding if and only if they share the same hashed value and thus the same permuted index.} At the same time, it breaks any stable association between a specific index and its global distribution across training epochs. The encoder cannot memorize that a particular index corresponds to a corpus-specific keyword. Therefore, it must instead rely on structural patterns, such as how the same value appears in compatible rows and columns across the two tables.

At inference time, the permutation is disabled and embeddings are looked up directly using $\mathrm{idx}_{i, j}$. The mapping remains deterministic and efficient while the model has already learned to be insensitive to the absolute positions of embedding indices.

\smallskip\noindent\textbf{Size--Shape Normalization Features.}
Value embeddings alone do not adjust for distribution shifts in table sizes and aspect ratios. We therefore complement the node-level mapping with \emph{size–shape features} that capture relative shape and scale rather than absolute magnitudes. For a table pair $(T_1,\,T_2)$, we compute a 7-dimensional vector $S(T_1,\,T_2)$ from the log-transformed row and column counts of both tables $\log m_1$, $\log n_1$, $\log m_2$, $\log n_2$, the log of the smaller table area $\log (\min(|T_1|,\,|T_2|))$, and the ratios between row and column counts $\frac{\min(m_1, m_2)}{\max(m_1, m_2)}$, $\frac{\min(n_1, n_2)}{\max(n_1, n_2)}$. These features are then fed into the overlap fusion head after a scaling and mixing step. They stabilize the representation across corpora with different table size distributions.

Together, the stochastic value mapping and size--shape features decouple identity from distributional context. The permutation mechanism forces the encoder to treat value identities as purely symbolic rather than tying them to persistent embeddings associated with corpus-specific distributions. The size--shape features, in turn, factor out absolute table magnitudes and anchor the model on relative shape- and size-related signals that transfer across corpora. As a result, the learned representation retains equality information needed for overlap while reducing sensitivity to domain-specific value or size distributions. Finally, during training we couple this randomized mapping with a consistency regularizer (Section~\ref{subsec:mtd-obj}) that enforces stable predictions across independently sampled permutations (lines~\ref{ln:fwd1}--\ref{ln:fwd2},~\ref{ln:cons}). This encourages invariance to bucket-index relabeling rather than memorization of bucket identities.

\smallskip\noindent\uline{Complexity}.
Bucketization and stochastic permutation operate in $\mathcal{O}(|T|)$ time per table, plus $\mathcal{O}(B)$ time per batch to generate a new permutation. Computing $S(T_1,\,T_2)$ requires only simple scalar operations on row and column counts and is negligible compared to hypergraph encoding.

\section{Alignment-Guided Prediction}\label{sec:mtd-training}

Section~\ref{sec:mtd-encoding} describes how \Our{} encodes each table into a structure-aware and domain-robust representation. We now turn to the pairwise components that make these representations predictive of alignment-dependent overlap. As shown in Figure~\ref{fig:method} (right), \Our{} (i)~augments the encoded table representations with inexpensive interaction features computed directly from the raw pair, and (ii)~trains the shared encoder with alignment-guided objectives that expose correspondence signals without requiring explicit row--column alignments. We first describe the interaction-aware regressor, and then present the training losses and regularizers, \revthree{following the prediction and loss computation steps in Procedure~\ref{algo:train}.}

\subsection{Interaction-Aware Regressor}\label{subsec:mtd-pred}

The \predict{} stage combines (i)~structure-aware embeddings produced by the shared encoder and (ii)~lightweight interaction features computed directly from the raw pair. This design injects cheap inter-table signals without quadratic cross-attention over tables.

\smallskip\noindent\textbf{Inter-Table Interaction Features.}
For a table pair $(T_1,\,T_2)$, we construct a compact \emph{interaction feature vector} $I(T_1,\,T_2) = \big[S(T_1,\,T_2)\mathbin{\Vert} J(T_1,\,T_2)\big]$. Here, $S(T_1,\,T_2)$ is the 7-dimensional \emph{size--shape descriptor} defined earlier (from log-transformed table dimensions and row/column ratios), and $J(T_1,\,T_2)$ is a \emph{Jaccard anchor} defined as the Jaccard similarity between the sets of cell values of $T_1$ and $T_2$. These interaction features do not attempt to reconstruct the optimal permutation. Instead, they serve as signals about value co-occurrence and relative table shape and size. These help the predictor to estimate plausible overlap ratios without relying solely on the learned structural embeddings (line~\ref{ln:interact}).

\smallskip\noindent\textbf{Overlap Fusion Head and Overlap Ratio Regressor.}
Let $Z_1$ and $Z_2$ be the multi-scale table embeddings produced by the Row–Column Hypergraph Encoder and multi-scale table readout. We first form a structural difference feature $Z_{\text{diff}} = \big|Z_1 - Z_2\big|$. We pass the interaction vector $I(T_1,\,T_2)$ through a small interaction feature encoder (a linear projection with normalization and non-linearity) to obtain a hidden vector $I_{\text{hid}}$ (line~\ref{ln:pred}).

The \emph{overlap fusion head} concatenates structural and interaction representations $\big[Z_{\text{diff}} \parallel I_{\text{hid}}\big]$ and feeds them into the \emph{overlap ratio regressor}, a two-layer MLP that outputs the predicted overlap ratio $\hat{\theta} \in [0,1]$. This allows the predictor to combine fine-grained structural similarity with inter-table interaction signals (line~\ref{ln:pred}).

\smallskip\noindent\uline{Complexity}.
Computing $J(T_1,T_2)$ is near-linear in $|T_1| + |T_2|$ using batched hash-based set operations, whereas computing $S(T_1,T_2)$ is constant time. The interaction feature encoder and Overlap Fusion Head operate on low-dimensional vectors and have cost $\mathcal{O}(d)$ per pair, independent of table size.

\subsection{Alignment-Guided Objectives}\label{subsec:mtd-obj}

\Our{} is trained with a primary overlap regression loss and two regularizers. The multi-granularity alignment regularizer injects row-, column-, and value-level correspondence signals into the shared encoder without requiring discrete alignment supervision. The consistency regularizer enforces prediction stability under the stochastic value permutations of domain-robust value mapping. Both regularizers are used only during training, so inference remains a single bi-encoder forward pass.

\smallskip\noindent\textbf{Overlap Regression Loss.}
The main objective is the \emph{overlap regression loss}, an L1 loss between the predicted overlap ratio and the ground truth (line~\ref{ln:main}): $\mathcal{L}_{\text{main}} = \left|\, \hat{\theta}(T_1, T_2) - \theta(T_1, T_2) \,\right|$.

\smallskip\noindent\textbf{Multi-Granularity Alignment Regularizer.}
The \emph{multi-granularity alignment regularizer} encourages the encoder to produce embeddings that admit the inter-table alignments needed for overlap, without requiring explicit row or column match labels. It provides complementary alignment signals at the row, column, and value levels, including constraints that account for how shared values participate in each table’s row--column structure. The regularizer comprises three components and combines them with a gating factor derived from the ground-truth overlap to avoid enforcing alignments on pairs with negligible overlap.

\smallskip\noindent\textbf{(i) RowColumn-Align.} 
\emph{RowColumn-Align} first summarizes cell-level embeddings into row proxies and column proxies by mean pooling over the corresponding row and column hyperedges. It then encourages the two tables to admit a consistent row and column correspondence using entropic optimal transport (OT)~\cite{cuturi2013sinkhorn}. Concretely, for the row proxies of $T_1$ and $T_2$, we compute a soft matching matrix $P$ with non-negative entries. The matrix is normalized so that each row proxy distributes a fixed total amount of matching weight across the other table, yielding a stable ``soft assignment''. A Sinkhorn-like normalization computes $P$ efficiently~\cite{cuturi2013sinkhorn}. Using $P$, we reconstruct each row proxy of $T_1$ as a weighted combination of row proxies from $T_2$, and vice versa. The row alignment loss is the mean-squared reconstruction error from these two reconstructions. We apply the same procedure to the column proxies to obtain a column alignment loss, and combine the two as $\mathcal{L}_{\text{RC}} = \mathcal{L}_{\text{row}} + \mathcal{L}_{\text{col}}$.

From an optimization viewpoint, this OT-based construction replaces discrete row and column permutation search with a continuous soft matching that remains differentiable end-to-end. When the two proxy sets have equal size and uniform normalization, the feasible soft matchings include the classical doubly-stochastic relaxations of permutation matrices (a standard convex relaxation of assignments)~\cite{mena2018learning}. In contrast, \Sloth{} optimizes directly over discrete permutations. RowColumn-Align therefore biases the encoder toward representations for which a low-cost row and column correspondence exists, while avoiding combinatorial optimization.

\smallskip\noindent\textbf{(ii) Value-Align.}
\emph{Value-Align} enforces identity consistency for values shared across the two tables. Let $\mathcal{V}_{\cap}$ denote the set of hashed values that appear in both $T_1$ and $T_2$. For each $v \in \mathcal{V}_{\cap}$, we construct a value proxy $h_v^{(k)}$ by mean pooling the embeddings of all cell nodes in $T_k$ whose value equals $v$. We then penalize the squared Euclidean discrepancy between the two proxies: $\mathcal{L}_{\text{val}} = \sum_{v \in \mathcal{V}_{\cap}} \| h_v^{(1)} - h_v^{(2)} \|_2^2$. This term encourages identical values to be encoded compatibly across tables despite differences in local neighborhoods.

\smallskip\noindent\textbf{(iii) Context-Align.}
Value identity alone can be misleading when the same cell value appears in both tables but serves different roles. \emph{Context-Align} therefore enforces role consistency by comparing each shared value’s relationship to the global table embedding.

Let $Z_1$ and $Z_2$ be the global embeddings of $T_1$ and $T_2$. For each $v \in \mathcal{V}_{\cap}$, we compare the distance from the value proxy to the corresponding table embedding and penalize discrepancies: $\mathcal{L}_{\text{ctx}} = \sum_{v \in \mathcal{V}_{\cap}} \Big| \, \| h_v^{(1)} - Z_1 \|_2 - \| h_v^{(2)} - Z_2 \|_2 \, \Big|$. Intuitively, truly overlapping cell values tend to occur in comparable contexts, so their distances to the corresponding global embeddings should be similar. When a shared value appears in mismatched contexts, these distances diverge. Thus, Context-Align penalizes the mismatch and discourages the model from over-relying on coincidental value matches that lack structural correspondence.

\smallskip Let $\mathcal{L}_{\text{RC}}$, $\mathcal{L}_{\text{val}}$, and $\mathcal{L}_{\text{ctx}}$ denote the three terms. Their combination is gated by the ground-truth overlap ratio using an \emph{alignment gating threshold} $\kappa \in [0,1)$: $W_{\text{gate}} = \max\left(0, \frac{\theta - \kappa}{1 - \kappa}\right)^\gamma$, and $\mathcal{L}_{\text{align}} = W_{\text{gate}} \left(\lambda_{\text{RC}}\mathcal{L}_{\text{RC}} + \lambda_{\text{val}}\mathcal{L}_{\text{val}} + \lambda_{\text{ctx}}\mathcal{L}_{\text{ctx}}\right)$. Pairs with small true overlap (i.e., $\theta < \kappa$) do not contribute to alignment, preventing the model from forcing alignments where none exist (lines~\ref{ln:gate}--\ref{ln:align}).

\smallskip\noindent\textbf{Consistency Regularizer.}
The \emph{consistency regularizer} enforces stability under stochastic value permutations~\cite{xie2020unsupervised, sajjadi2016regularization}. For each batch, the model performs two forward passes with independently sampled permutations in the domain-robust value mapping (lines~\ref{ln:perm1}--\ref{ln:fwd2}). This results in two predictions $\hat{\theta}_1$ and $\hat{\theta}_2$. The regularizer is $\mathcal{L}_{\text{cons}} = \|\hat{\theta}_1 - \hat{\theta}_2\|_2^2$. This term encourages the prediction to focus on value equality, instead of particular embedding index permutation and corpus-specific distributions (line~\ref{ln:cons}).

\smallskip\noindent\textbf{Total Objective.}
The final training loss is a weighted sum $\mathcal{L}_{\text{total}} = \mathcal{L}_{\text{main}} + \mathcal{L}_{\text{align}} + \lambda_{\text{cons}}\mathcal{L}_{\text{cons}}$. The alignment and consistency terms are omitted at inference for efficiency (line~\ref{ln:total}).

\smallskip\noindent\uline{Complexity}.
The overlap regression loss is negligible compared to encoding. The alignment regularizer adds a term that is $\mathcal{O}((m^2 + n^2) d)$ per pair due to soft row and column alignments. However, this is evaluated only during training and on tables with modest sizes. The consistency regularizer requires an additional forward pass per batch during training. At inference time, since both regularizers are disabled, the prediction cost is identical to a single forward pass of the encoder and overlap fusion head. 

\section{Theoretical Analysis}\label{sec:theory}

In this section, we show that \Our{} is aligned with the symmetries and structure of table overlap. We establish (i)~invariance to row/column permutation, (ii)~full row and column context per encoder layer, and (iii)~invariance to bucket-index relabeling. We also summarize end-to-end inference time and space complexity.

\subsection{Row/Column Permutation Invariance}\label{subsec:theo-invariance}

\uline{Key Result}. Our encoder matches the symmetry of the overlap definition. Reordering rows or columns only permutes cell-level representations, while the pooled table embedding and the final prediction remain unchanged.

Let $T$ be a table with $m$ rows and $n$ columns. Let $\sigma \in S_m$ and $\rho \in S_n$ be permutations of rows and columns, inducing a permutation of cell indices $(i,j)\mapsto (\sigma(i),\rho(j))$. Let $P_{\sigma,\rho}$ be the corresponding permutation matrix acting on cell-node embeddings (ordered by any fixed enumeration of the $mn$ cells). Let $\mathcal{H}(T)=(\mathcal{V},\mathcal{E}_{\mathrm{row}}\cup\mathcal{E}_{\mathrm{col}})$ be the Two-View Row--Column Hypergraph, where each cell is a node $v_{i,j}\in\mathcal{V}$, each row is a hyperedge $e^r_i=\{v_{i,1},\dots,v_{i,n}\}$, and each column is a hyperedge $e^c_j=\{v_{1,j},\dots,v_{m,j}\}$.

\smallskip\noindent\emph{Assumption 1~(Permutation-equivariant message passing).}
\revfour{In each Dual Hyperedge Convolution layer, the row-view and column-view hypergraph convolutions} aggregate within row and column hyperedges using a symmetric function (sum) and apply the same node-wise transform to every node. Global pooling is sum over cell nodes.

\begin{theorem}[Encoder Equivariance]\label{theorem:enc_equiv}
    Let $H^{(l)}(T) \in \mathbb{R}^{|\mathcal{V}| \times d}$ be the node embedding matrix after $l$ Dual Hyperedge Convolution layers. Under Assumption~1 and value features that depend only on cell contents, for any $\sigma,\rho$, $H^{(l)}(T^{\sigma,\rho}) = P_{\sigma,\rho}H^{(l)}(T)$ for all $l = \{0,\dots,L\}$.
\end{theorem}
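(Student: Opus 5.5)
We argue by induction on the layer index $l$. First we fix conventions. Write $T^{\sigma,\rho}$ for the permuted table, with cell values $c^{\sigma,\rho}_{\sigma(i),\rho(j)} = c_{i,j}$. Let $P=P_{\sigma,\rho}$ be the permutation matrix that moves the row indexed by node $v_{i,j}$ to the position of node $v_{\sigma(i),\rho(j)}$.

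\textbf{Base case $l=0$.} The initial embeddings $H^{(0)}$ are obtained from value features: the SHA-256 hash, bucketization $\mathrm{idx}=x \bmod B$, an optional fixed permutation $\pi$, and an embedding lookup. Each node's feature therefore depends only on its own cell content. Since cell $(\sigma(i),\rho(j))$ of $T^{\sigma,\rho}$ carries the value $c_{i,j}$, we get $H^{(0)}(T^{\sigma,\rho}) = P H^{(0)}(T)$ immediately.

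\textbf{Key structural step.} Next we show that the permutation maps the hypergraph onto itself, up to relabeling of hyperedges. In $\mathcal{H}(T^{\sigma,\rho})$, the row hyperedge $e^r_{\sigma(i)}$ consists of the nodes $v_{\sigma(i),\rho(j)}$ for $j=1,\dots,n$. These are exactly the images under the cell permutation of the nodes in $e^r_i$ of $\mathcal{H}(T)$, because $\rho$ is a bijection on $J$. Symmetrically, $e^c_{\rho(j)}$ is the image of $e^c_j$.

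In incidence-matrix form, let $\Pi_\sigma$ and $\Pi_\rho$ be the permutation matrices on hyperedges. Then the row and column incidence matrices satisfy
\[
  \mathbf{H}_{\mathrm{row}}(T^{\sigma,\rho}) = P\,\mathbf{H}_{\mathrm{row}}(T)\,\Pi_\sigma^{\top},
  \qquad
  \mathbf{H}_{\mathrm{col}}(T^{\sigma,\rho}) = P\,\mathbf{H}_{\mathrm{col}}(T)\,\Pi_\rho^{\top}.
\]
Node degrees are all equal to $n$ or $m$, and hyperedge degrees are also preserved. So the diagonal normalizers satisfy $D_v' = P D_v P^{\top}$ and $D_e' = \Pi D_e \Pi^{\top}$.

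\textbf{Inductive step.} Assume $H^{(l-1)}(T^{\sigma,\rho}) = P H^{(l-1)}(T)$. By Assumption~1, $\mathrm{HConv}$ has the form $\phi\bigl(\mathbf{H}\,\mathrm{AGG}(\mathbf{H}^{\top}X)\bigr)$ with a shared node-wise map, possibly including the symmetric degree normalization $D_v^{-1/2}\mathbf{H}D_e^{-1}\mathbf{H}^{\top}D_v^{-1/2}X\Theta$. Substituting the relations above, the inner factors cancel: $\Pi^{\top}\Pi=I$ and $P^{\top}P=I$. This gives
\[
  \mathrm{HConv}\bigl(PX,\mathcal{E}'_{\mathrm{row}}\bigr) = P\,\mathrm{HConv}\bigl(X,\mathcal{E}_{\mathrm{row}}\bigr),
\]
and the same identity holds for the column view.

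The fusion step applies concatenation, Linear, LayerNorm and ReLU row-wise, that is, per node with shared weights. Each of these operations commutes with left-multiplication by $P$: for any row-wise map $f$, $f(PX)=Pf(X)$, and $[PA\parallel PB]=P[A\parallel B]$. Hence $H^{(l)}(T^{\sigma,\rho}) = P H^{(l)}(T)$, which completes the induction.

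\textbf{Expected obstacle.} The main difficulty is the structural step: showing that the permutation of hyperedge labels, $\Pi_\sigma$ or $\Pi_\rho$, cancels inside the convolution. This requires that the aggregation within a hyperedge be symmetric and that no hyperedge-specific parameters exist, for instance no learned per-row weight matrix. I would state this explicitly as part of reading Assumption~1, noting that the hyperedge weight matrix $W_e$ in HGNN-style convolutions is taken to be the identity or uniform, and then verify that degree normalizations transform covariantly. Everything else is routine commuting of row-wise maps with $P$.
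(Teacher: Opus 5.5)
Your proof is correct and follows the same route as the paper: induction on $l$, with content-only value features giving the base case, and symmetric within-hyperedge aggregation plus node-wise concatenation/Linear/LayerNorm/ReLU commuting with $P_{\sigma,\rho}$ giving the inductive step; your incidence-matrix identity $\mathbf{H}_{\mathrm{row}}(T^{\sigma,\rho}) = P\,\mathbf{H}_{\mathrm{row}}(T)\,\Pi_\sigma^{\top}$ makes explicit what the paper only describes informally as hyperedge relabeling. One small slip that does not affect the argument: in each view every node has degree $1$, and only the hyperedge sizes are $n$ (rows) or $m$ (columns), so $D_v$ and $D_e$ are scalar multiples of the identity and commute with the permutations trivially.
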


\begin{proof}
    We prove the claim by induction on $l$.

    Base case $l = 0$. Node features are computed from cell values only (hashing and embedding lookup). Permuting rows and columns reorders the cells but does not change their values. Therefore, the initial node feature matrix satisfies $H^{(0)}(T^{\sigma,\rho}) = P_{\sigma,\rho}H^{(0)}(T)$.

    Inductive step. Assume $H^{(l-1)}(T^{\sigma,\rho}) = P_{\sigma,\rho}H^{(l-1)}(T)$. Consider the row-view HConv at layer $l$. Each node update is formed by aggregating messages from the row hyperedge and applying node-wise transforms. Under the permutation $(\sigma,\rho)$, the row hyperedges are relabeled, and the multiset of node embeddings inside each hyperedge is permuted. By Assumption~1, the hyperedge aggregation result is unchanged up to the same node reordering $P_{\sigma,\rho}$. The same holds for the node aggregation over incident hyperedges. All subsequent transforms are applied node-wise and thus commute with $P_{\sigma,\rho}$. Hence, the row-view output satisfies $H^{(l)}_{\mathrm{row}}(T^{\sigma,\rho}) = P_{\sigma,\rho}H^{(l)}_{\mathrm{row}}(T)$.
    The argument for the column-view output $H^{(l)}_{\mathrm{col}}$ is identical. Concatenation $[H^{(l)}_{\mathrm{row}} \parallel H^{(l)}_{\mathrm{col}}]$, followed by Linear, LayerNorm and ReLU, preserves equivariance because these operations are applied per node. Therefore, $H^{(l)}(T^{\sigma,\rho}) = P_{\sigma,\rho}H^{(l)}(T)$. This completes the induction.
\end{proof}

\begin{theorem}[Table Embedding Invariance]\label{theorem:tab_emb_inv}
    Let $Z(T)$ be the table embedding obtained by pooling and concatenating layer-wise readouts, followed by LayerNorm. Under Assumption~1, for any $\sigma,\rho$, $Z(T^{\sigma,\rho}) = Z(T)$.
\end{theorem}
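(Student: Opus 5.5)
The plan is to derive the statement directly from Theorem~\ref{theorem:enc_equiv}, following the multi-scale readout pipeline stage by stage. Fix $\sigma\in S_m$ and $\rho\in S_n$, and fix a layer $l\in\{1,\dots,L\}$. Theorem~\ref{theorem:enc_equiv} gives $H^{(l)}(T^{\sigma,\rho}) = P_{\sigma,\rho}H^{(l)}(T)$. The value features depend only on cell contents, and at inference no stochastic permutation is applied; during training the same $\pi$ is used for both orderings. So the hypotheses of Theorem~\ref{theorem:enc_equiv} hold.

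\textbf{Step 1: the inner projection is equivariant.} The inner projection is a lightweight MLP applied row-wise, i.e., separately to each node embedding. Write it as $g$ applied to each row of $H^{(l)}$. Then $g(P_{\sigma,\rho}H^{(l)}(T)) = P_{\sigma,\rho}\,g(H^{(l)}(T))$, since a node-wise map commutes with any reordering of nodes.

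\textbf{Step 2: sum pooling is invariant.} Global sum pooling is $z^{(l)}_{\mathrm{pool}} = \mathbf{1}^\top g(H^{(l)})$. Because $P_{\sigma,\rho}$ is a permutation matrix, $\mathbf{1}^\top P_{\sigma,\rho} = \mathbf{1}^\top$. Hence $z^{(l)}_{\mathrm{pool}}(T^{\sigma,\rho}) = z^{(l)}_{\mathrm{pool}}(T)$. Equivalently, the multiset of projected node vectors is unchanged, and summation depends only on that multiset.

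\textbf{Step 3: the remaining operations preserve equality.} The outer MLP is a deterministic function of $z^{(l)}_{\mathrm{pool}}$, so $z^{(l)}(T^{\sigma,\rho}) = z^{(l)}(T)$ for every $l$. Concatenation over $l=1,\dots,L$ and the final LayerNorm are deterministic functions of the tuple $(z^{(1)},\dots,z^{(L)})$. They involve no node index, so they preserve equality, which gives $Z(T^{\sigma,\rho}) = Z(T)$.

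\textbf{Main obstacle.} The only delicate point is ensuring no component of the readout depends on node order. This means checking two things. First, the inner MLP and any normalization inside it act per node; a batch-style normalization would still be permutation-invariant across nodes, but I would state it as part of Assumption~1. Second, the pooling is over all $mn$ cell nodes with no positional features. I would remark that Assumption~1 already fixes the pooling to be a sum, so $\mathbf{1}^\top P = \mathbf{1}^\top$ suffices. Finally, I would note a corollary: the prediction $\hat\theta$ is invariant under independent row and column permutations of either table. It depends only on $|Z_1-Z_2|$ and on $I(T_1,T_2)$, and $I$ uses only table dimensions and value sets, both of which are permutation-invariant.
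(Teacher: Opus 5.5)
Your proposal is correct and follows the paper's proof essentially step for step: invoke Theorem~\ref{theorem:enc_equiv}, note that the node-wise inner MLP commutes with $P_{\sigma,\rho}$, use invariance of symmetric (sum) pooling, and observe that the outer MLP, concatenation over layers, and LayerNorm preserve equality. Your extra remarks are consistent with the paper and go slightly beyond its terse argument: that the same bucket permutation $\pi$ must be used for both orderings, and that $\hat{\theta}$ is invariant (which the paper states right after the proof).
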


\begin{proof}
    We fix any layer $l$. The pooled readout has the form $z^{(l)} = \mathrm{MLP}_{\mathrm{out}}(\mathrm{Pool}(\mathrm{MLP}_{\mathrm{in}}(H^{(l)})))$, where $\mathrm{Pool}$ is sum or mean over nodes. By Theorem~\ref{theorem:enc_equiv}, $H^{(l)}(T^{\sigma,\rho}) = P_{\sigma,\rho}H^{(l)}(T)$. Node-wise MLPs commute with $P_{\sigma,\rho}$. Symmetric pooling satisfies $\mathrm{Pool}(P_{\sigma,\rho}X) = \mathrm{Pool}(X)$. Hence, $z^{(l)}(T^{\sigma,\rho}) = z^{(l)}(T)$ for every $l$. Concatenation over $l$ and LayerNorm on the resulting vector preserve equality.
\end{proof}

Since our inter-table interaction features $S$ and $J$ are also invariant to row/column reorderings, the full predictor $\hat{\theta}(T_1,T_2)$ is invariant to independent permutations applied to $T_1$ and $T_2$.

\subsection{Full Row/Column Context per Layer}\label{subsec:theo-convolution}

\uline{Key Result}. A single dual row--column hypergraph convolution layer lets each cell aggregate from all cells in both its row and column. This yields full row and column context per layer without multi-hop message passing through explicit row/column nodes.

Consider the row view. Let $|T| = mn$ be the number of cells. Let $H_r \in \{0,1\}^{|T| \times m}$ be the cell-row incidence matrix, where $(H_r)_{v,i} = 1$ iff cell-node $v$ belongs to row $i$. We define $D_v$ and $D_e$ as the diagonal degree matrices of nodes and hyperedges in the incidence bipartite graph. A standard mean aggregation hypergraph propagation can be written as $X' = D_v^{-1}H_r D_e^{-1}H_r^\top X W$, for node embedding matrix $X \in \mathbb{R}^{|T| \times d}$ and a learnable linear map $W$. The column view uses the analogous incidence matrix $H_c$.

\begin{proposition}[Two-Hop Equivalence]\label{prop:twohop_equiv}
    Let $\mathcal{B}_r$ be the bipartite incidence graph between cell nodes and row-hyperedge nodes. The linear operator $H_r D_e^{-1}H_r^\top$ aggregates information along length-2 walks in $\mathcal{B}_r$ (cell $\rightarrow$ row $\rightarrow$ cell). Therefore, one row-HConv updates each cell using information from all cells in its row in a single layer. The same holds for the column view.
\end{proposition}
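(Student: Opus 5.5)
The plan is to compute the matrix $M_r := H_r D_e^{-1} H_r^\top \in \mathbb{R}^{|T|\times|T|}$ entrywise and read off its support. For cell nodes $u=v_{i,j}$ and $v=v_{i',j'}$, the definition of matrix product gives
\[
(M_r)_{u,v} = \sum_{k=1}^{m} (H_r)_{u,k}\,(D_e^{-1})_{k,k}\,(H_r)_{v,k}.
\]
Each factor $(H_r)_{u,k}(H_r)_{v,k}$ equals $1$ exactly when row-hyperedge $k$ is incident to both $u$ and $v$. Hence the sum counts length-2 walks $u\to e^r_k\to v$ in $\mathcal{B}_r$, each weighted by $1/\deg(e^r_k)$. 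This establishes the first claim: the operator aggregates along cell $\to$ row $\to$ cell walks.

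Next, I would use the structure of the row hypergraph. Every cell lies in exactly one row hyperedge, and $|e^r_i| = n$, so $D_e = nI_m$. Consequently $(M_r)_{u,v} = 1/n$ if $i=i'$ and $0$ otherwise. That is, $M_r$ is block diagonal with $m$ blocks $\tfrac1n \mathbf{1}_n\mathbf{1}_n^\top$ after grouping cells by row. Moreover, $D_v = I$, since each node has degree $1$ in the row view. Therefore
\[
(X')_u = \Bigl(\tfrac1n\sum_{j'=1}^{n} X_{v_{i,j'}}\Bigr) W,
\]
so the updated embedding of every cell in row $i$ depends on every cell of row $i$ with nonzero weight, and on nothing outside it. This is exactly "all cells in its row in a single layer." Nothing beyond the incidence definitions from Section~\ref{subsec:mtd-hypergraph} is needed.

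The column view follows by symmetry. Replacing $H_r$ with $H_c$ gives $D_e = mI_n$ and blocks $\tfrac1m\mathbf{1}_m\mathbf{1}_m^\top$ indexed by columns. I would then remark that the Dual Hyperedge Convolution layer concatenates both views before the node-wise Linear. As a result, after one layer each cell $v_{i,j}$ depends on the union of row $i$ and column $j$, whereas the tripartite graph needs two hops (cell $\to$ row node $\to$ cell) per view.

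The only delicate point is the normalization convention. If the implementation uses the symmetric form $D_v^{-1/2}H D_e^{-1}H^\top D_v^{-1/2}$ or includes hyperedge weights, I would note that the support of the operator is unchanged because all scalings are positive and diagonal. The reachability conclusion therefore holds regardless of convention, and only the exact weights $1/n$ (respectively $1/m$) depend on it.
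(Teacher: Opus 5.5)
Your proposal is correct and follows essentially the same route as the paper: both read off the support of $H_r D_e^{-1}H_r^\top$ from the fact that the $(u,v)$ entry of $H_rH_r^\top$ counts shared row hyperedges, which is $1$ for cells in the same row and $0$ otherwise. You go further by making the normalization explicit ($D_e = nI_m$, $D_v = I$, block-diagonal $\tfrac1n\mathbf{1}_n\mathbf{1}_n^\top$ structure) and by noting that the support does not depend on the normalization convention, both of which the paper leaves implicit.
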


\begin{proof}
    A length-2 walk in $\mathcal{B}_r$ from a cell node $u$ to a cell node $v$ exists iff $u$ and $v$ share a row hyperedge. The matrix $H_r H_r^\top$ has $(u, v)$ entry equal to the number of shared incident row hyperedges, which is $1$ if $u$ and $v$ are in the same row and $0$ otherwise. Multiplication by $H_r D_e^{-1}H_r^\top$ therefore computes a degree normalized sum or mean over cells in the same row.
\end{proof}

Thus, a dual row--column layer provides full receptive fields for both row and column contexts in one encoder layer.

\subsection{Index Relabeling Invariance}\label{subsec:theo-consistency}

\uline{Key Result}. The stochastic bucket-index permutation and the consistency regularizer make the predictor insensitive to the arbitrary identities of bucket indices. The consistency loss equals twice the prediction variance over index relabelings, and minimizing it drives invariance to bucket-index permutations.

\smallskip\noindent\emph{Stochastic relabeling.} Each value is hashed and bucketized to an index in $\{0, \dots, B-1\}$. During training, we sample a random permutation $\pi \in S_B$ and perform embedding lookup using $\pi(\mathrm{idx})$. For a fixed table pair $(T_1, T_2)$, we define the prediction under $\pi$ as $\hat{y}_{\pi} = f_{\Theta}(\pi \circ \mathrm{idx}(T_1),\, \pi \circ \mathrm{idx}(T_2))$. The Consistency Regularizer samples $\pi, \pi'$ independently and penalizes $(\hat{y}_{\pi} - \hat{y}_{\pi'})^2$.

\begin{theorem}[Consistency loss equals prediction variance over index relabelings]\label{theorem:conloss_eq_predvar}
    For any fixed input pair $(T_1, T_2)$, with $\pi, \pi'$ sampled independently and uniformly from $S_B$, $\mathbb{E}_{\pi, \pi'} [(\hat{y}_{\pi} - \hat{y}_{\pi'})^2] = 2 \mathrm{Var}_{\pi} [\hat{y}_{\pi}]$.
\end{theorem}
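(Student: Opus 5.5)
The plan is to apply the elementary identity for the squared difference of two i.i.d.\ random variables. Fix the input pair $(T_1,T_2)$ and the parameters $\Theta$. Then $\hat{y}_{\pi}$ is a deterministic function of $\pi$ alone, so it is a real-valued random variable once $\pi$ is drawn uniformly from $S_B$. Since $S_B$ is finite, $\hat{y}_\pi$ takes finitely many values. Its first and second moments therefore exist, and no integrability argument is needed.

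The key steps are as follows. First, write $Y=\hat{y}_{\pi}$ and $Y'=\hat{y}_{\pi'}$. Because $\pi$ and $\pi'$ are independent and uniform on $S_B$, and the same map $\pi\mapsto \hat{y}_\pi$ is applied to each, $Y$ and $Y'$ are independent and identically distributed. Second, expand $(Y-Y')^2 = Y^2 - 2YY' + Y'^2$ and use linearity of expectation. The identical distribution gives $\mathbb{E}[Y'^2]=\mathbb{E}[Y^2]$, and independence gives $\mathbb{E}[YY'] = \mathbb{E}[Y]\,\mathbb{E}[Y'] = (\mathbb{E}[Y])^2$. Hence
\[
\mathbb{E}_{\pi,\pi'}[(Y-Y')^2] = 2\mathbb{E}[Y^2] - 2(\mathbb{E}[Y])^2 = 2\,\mathrm{Var}_{\pi}[Y].
\]
An equivalent route is to center both variables at $\mu=\mathbb{E}[Y]$ and note that the cross term $\mathbb{E}[(Y-\mu)(Y'-\mu)]$ vanishes by independence.

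The only delicate point is justifying independence and identical distribution, and it is a modelling point rather than a technical one. The statement requires that everything else entering $\hat{y}_\pi$ is held fixed across the two forward passes. This includes the table pair, the parameters, and the interaction features $I$, which do not depend on $\pi$ (see lines~\ref{ln:fwd1}--\ref{ln:fwd2} of Procedure~\ref{algo:train}). I will also assume no other stochastic component, such as dropout, that differs between the two passes. If dropout were present, the same argument would still go through with $\pi$ replaced by the joint randomness of each pass, as long as each pass draws its randomness independently from the same distribution. I will state this assumption explicitly.

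Finally, I will add a one-line corollary for the consistency loss. Since $\mathbb{E}[\mathcal{L}_{\mathrm{cons}}]=2\mathrm{Var}_\pi[\hat{y}_\pi]$, the expected consistency loss is zero exactly when $\hat{y}_\pi$ is constant over all $\pi\in S_B$. In that case the predictor is invariant to bucket-index relabelings, which is the interpretation given in the Key Result.
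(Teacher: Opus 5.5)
Your proposal is correct and matches the paper's proof: both set $X=\hat{y}_\pi$ and $Y=\hat{y}_{\pi'}$, use that they are i.i.d., and expand to get $\mathbb{E}[(X-Y)^2]=2\mathbb{E}[X^2]-2(\mathbb{E}[X])^2=2\,\mathrm{Var}(X)$. Your additional points are sound refinements that the paper leaves implicit: finiteness of $S_B$ guarantees the moments exist, the parameters and interaction features must be held fixed across the two passes, any dropout must be drawn independently per pass, and the loss vanishes exactly when $\hat{y}_\pi$ is constant on $S_B$.
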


\begin{proof}
    Let $X=\hat{y}_\pi$ and $Y=\hat{y}_{\pi'}$. Since $X$ and $Y$ are i.i.d.,
    $\mathbb{E}[(X-Y)^2]
    = \mathbb{E}[X^2] + \mathbb{E}[Y^2] - 2\mathbb{E}[X]\mathbb{E}[Y]
    = 2(\mathbb{E}[X^2] - (\mathbb{E}[X])^2)
    = 2\,\mathrm{Var}(X)$.
\end{proof}

Therefore, minimizing the consistency loss directly suppresses $\mathrm{Var}_\pi[\hat{y}_\pi]$, making predictions insensitive to the particular bucket indices assigned to values. Intuitively, because bucket identities are randomized across training steps, the model cannot attach stable semantics to any specific index and is pushed to rely on index-invariant signals.

\subsection{End-to-End Complexity}\label{subsec:theo-complexity}
We analyze the end-to-end cost of a deployed estimation from raw tables to predicted overlap ratio $\hat{\theta}$. Encoding a table $T$ with $m$ rows, $n$ columns, and $|T| = mn$ cells costs $\mathcal{O}(L |T| d)$ time and $\mathcal{O}(|T| d)$ space, dominated by the Row--Column Hypergraph Encoder. For a table pair $(T_1,\,T_2)$, interaction feature computation is linear in $|T_1| + |T_2|$, and the overlap fusion head with overlap ratio regressor are $\mathcal{O}(d)$ in the embedding dimension and independent of table~size.

Thus, for fixed $L$ and $d$, \Our{} achieves linear encoding and pairwise inference, matching the asymptotic behavior of \Arma{}, while avoiding exponential search over row and column permutations in \Sloth{}. The alignment and consistency regularizers are used only during training. Hence, inference retains the same linear time as a standard bi-encoder, making \Our{} suitable for large-scale discovery and matching workloads over millions of tables.

\section{Experiments}\label{sec:exp}

This section presents a comprehensive evaluation of our proposed method. We begin by outlining the experimental setup (Section~\ref{subsec:exp-setup}). We then study effectiveness in three settings: accuracy of overlap ratio estimation (Section~\ref{subsec:exp-indomain}), domain-robust generalization (Section~\ref{subsec:exp-crossdomain}), and query-by-table retrieval using overlap ratio estimates (Section~\ref{subsec:exp-retrieval}). Finally, we report efficiency (Section~\ref{subsec:exp-efficiency}) and ablations of key components (Section~\ref{subsec:exp-ablation}).

\subsection{Experimental Setup}\label{subsec:exp-setup}

\noindent\textbf{Datasets.}
We evaluate on three datasets. Two are benchmark corpora released by \arma{}~\cite{pugnaloni2025table}: \emph{\wiki{}} and \emph{\git{}}. They are derived from the Wikipedia table corpus~\cite{bleifuss2021structured, bleifuss2021secret} and GitHub repositories~\cite{hulsebos2023gittables}, containing~\numprint{128620} and~\numprint{256834} tables, respectively. We use the provided pairs and splits: \wiki{} has 500k/60k/60k train/validation/test pairs, and \git{} has 500k/100k/100k. In both datasets, pairs are balanced across ten overlap-ratio bins $\in [0,1]$. The third is \emph{\real{}}, derived from 2.1 million property records collected over time~\cite{li2018homeseeker}, \revtwo{containing \numprint{20000} tables and \numprint{10000} evaluation pairs.} 

\revtwo{These datasets differ in both table scale and value distributions. Figure~\ref{fig:table_dim_dist} shows the distributions of table dimensions: \wiki{} and \git{} consist of small-to-medium web tables, while \real{} lies at a much larger scale, with orders-of-magnitude larger areas. Table~\ref{tab:table_val_dist} profiles cell values at the table level for each dataset. The profiles show clear domain differences: \wiki{} has the highest value diversity and is mostly textual, \git{} contains substantially more floating-point values than the others, and \real{} has the lowest value diversity with lower variation across tables. We include \real{} as a zero-shot target to evaluate transfer beyond the two prior benchmark corpora, and describe its pairing procedure in Section~\ref{subsec:exp-crossdomain}. We exclude \real{} from training due to the substantial overhead of training on much larger tables.}

\begin{figure}[t]
    \centering
    \includegraphics[width=0.48\textwidth]{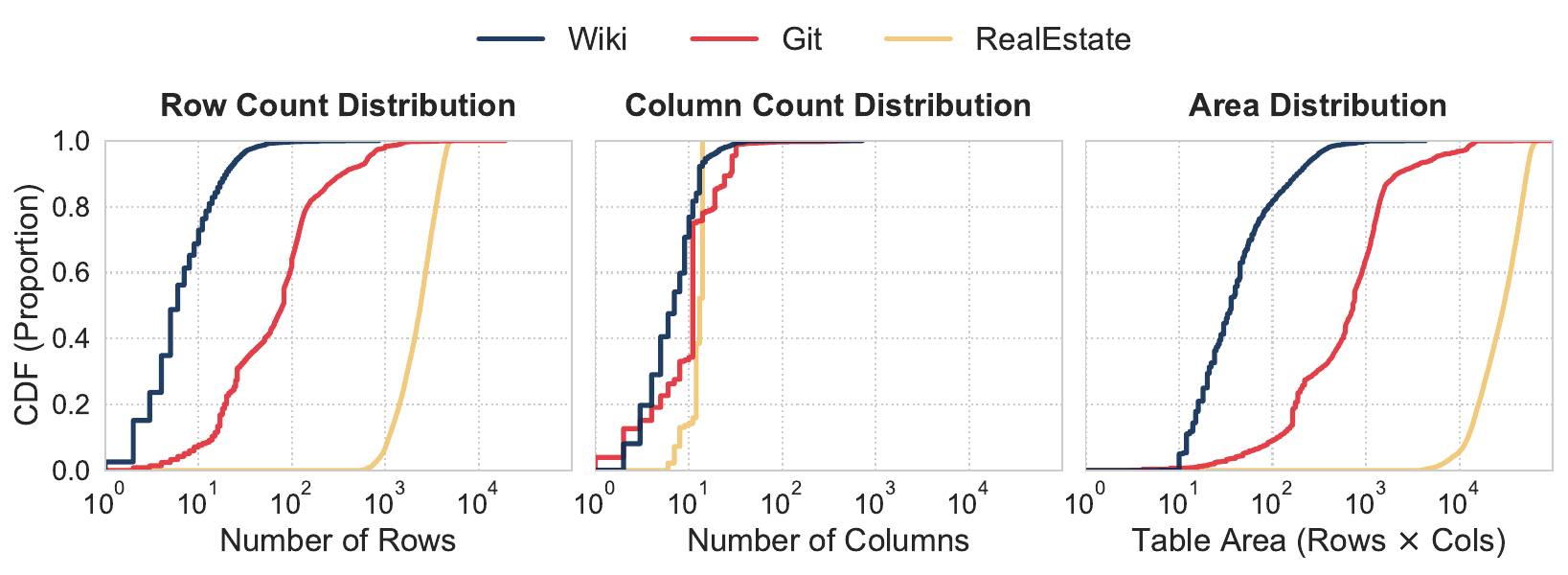}
    \caption[]{\revtwo{Cumulative distribution function of table dimensions across the three datasets.}}
    \label{fig:table_dim_dist}
\end{figure}
\begin{table}[t]
    \centering
    \footnotesize
    \setlength{\tabcolsep}{3pt}
    \caption{\revtwo{Profiles of cell-value distributions across the three datasets. ``Distinct'' is the fraction of distinct values among non-missing cells. ``Missing'', ``Text'', ``Int'', and ``Float'' are fractions over all cells in the table. Each entry reports the mean and standard deviation of these per-table percentages within each dataset.}}
    \label{tab:table_val_dist}
    \begin{tabular}{lccccc}
        \toprule
        \multirow{2}{*}{\textbf{Dataset}}
        & \multicolumn{2}{c}{\textbf{Value Profile}}
        & \multicolumn{3}{c}{\textbf{Value Type}} \\
        \cmidrule(lr){2-3}\cmidrule(lr){4-6}
        & \textbf{Distinct}
        & \textbf{Missing}
        & \textbf{Text}
        & \textbf{Int}
        & \textbf{Float} \\
        \midrule
        \wiki{} & \sdvalpct{65.5}{24.7} & \sdvalpct{7.3}{14.8}  & \sdvalpct{56.4}{29.6} & \sdvalpct{35.2}{30.6} & \sdvalpct{1.0}{4.3} \\
        \git{}  & \sdvalpct{51.0}{24.4} & \sdvalpct{11.4}{14.8} & \sdvalpct{41.0}{26.3} & \sdvalpct{32.3}{24.1} & \sdvalpct{15.2}{18.4} \\
        \real{} & \sdvalpct{37.1}{12.1} & \sdvalpct{12.0}{6.4}  & \sdvalpct{50.7}{11.3} & \sdvalpct{26.6}{8.8}  & \sdvalpct{10.8}{5.1} \\
        \bottomrule
    \end{tabular}
\end{table}


\smallskip\noindent\textbf{Methods for Comparison.}
We compare \our{} with the state of the art, \arma{}~\cite{pugnaloni2025table}, and its baselines, including exact and heuristic methods (\sloth{}, \js{}) and learned table embedding models (\bert{}, \rob{}, \turl{}, \embdi{}, \arma{}). We exclude LLM baselines because table serialization makes inference prohibitively expensive at our scale, and they do not naturally capture the injective row--column alignments that define overlap.

\begin{itemize}[leftmargin=*]
    \item \textbf{\sloth{}}~\cite{zecchini2024determining} computes ground-truth overlap ratios. We run its exact solver with a 60\,s timeout and fall back to the greedy approximation on timeout.
    
    \item \textbf{\js{}} variants measure value-level overlap between tables $T_1$ and $T_2$. Let $\mathcal{S}(T)$ be the set of distinct cell values in $T$, and let $\mathcal{B}(T)$ be the multiset of values including duplicates. We report:
    (i)~\emph{set-based} \textbf{\js{}}: $\frac{|\mathcal{S}(T_1)\cap\mathcal{S}(T_2)|}{|\mathcal{S}(T_1)\cup\mathcal{S}(T_2)|}$,
    (ii)~\emph{bag-union} variant (\textbf{\texttt{-BU}}): $\frac{2|\mathcal{B}(T_1)\cap\mathcal{B}(T_2)|}{|\mathcal{B}(T_1)|+|\mathcal{B}(T_2)|}$, and
    (iii)~\emph{bag-normalized} variant (\textbf{\texttt{-BN}}): $\frac{|\mathcal{B}(T_1)\cap\mathcal{B}(T_2)|}{\min(|T_1|,|T_2|)}$.
    
    \item \textbf{\bert{}/\rob{}}~\cite{devlin2019bert, liu2019roberta} are adapted via table-to-text serialization, each with three variants:
    (i)~\emph{Row} (\textbf{\texttt{-R}}) encodes each row as a sentence by concatenating cell values, then averages row embeddings to form a table vector;
    (ii)~\emph{Table} (\textbf{\texttt{-T}}) serializes the entire table into a single text sequence by joining rows and averages token embeddings; and
    (iii)~\emph{Hashed-Table} (\textbf{\texttt{-HT}}) applies the Table variant after replacing each cell value with its SHA-256 hash.
    
    \item \textbf{\turl{}}~\cite{deng2022turl} learns contextualized cell representations via a cell-filling objective and averages them into a table embedding.
    
    \item \textbf{\embdi{}}~\cite{cappuzzo2020embdi} adapts node2vec~\cite{grover2016node2vec} to a tripartite graph. Its table representations are pair-dependent and cannot be precomputed.
    
    \item \textbf{\arma{}}~\cite{pugnaloni2025table} embeds each table as a tripartite graph using GraphSAGE and estimates overlap via cosine similarity.
    
    \item \textbf{\our{}} is our alignment-guided overlap ratio estimator.
\end{itemize}

\smallskip\noindent\textbf{Evaluation Metrics.}
For pairwise prediction, we report mean absolute error (MAE) between predicted and ground-truth overlap ratio. For retrieval, we rank candidate tables by predicted overlap ratio and report nDCG@$k$~\cite{jarvelin2002ndcg}. For threshold-based filtering, we vary the overlap ratio threshold $\tau$ and treat a candidate as relevant if its ground-truth overlap ratio satisfies $\theta \ge \tau$. We report Selection Rate (proportion of tables returned), Recall, Precision, and F1. These metrics match those in \arma{}~\cite{pugnaloni2025table}. Specifically, Selection Rate is the complement of the Reduction Ratio, and Recall is equivalent to Pair Completeness~\cite{michelson2006rrpc, elfeky2002tailor}. Throughout the paper, $\downarrow$ indicates that lower values are better and $\uparrow$ indicates that higher values are better. We repeat each learning-based experiment five times with different random seeds and report mean $\pm$ standard deviation.

\smallskip\noindent\textbf{Training Details.}
\our{} is optimized using Adam with a learning rate of~$10^{-3}$. We train for up to 100~epochs with early stopping, using batch sizes of~512 on \wiki{} and~64 on \git{}. On a single GPU, training takes 30~hours on \wiki{} and 92~hours on \git{}. We set hash bucket size $B = 2 \times 10^6$ with embedding dimension of $d=256$. The model uses $L=2$ layers and a hidden dimension of $d_h=64$. We minimize MAE with alignment regularization ($\mathcal{\lambda}_{\text{RC}}=0.1$, $\mathcal{\lambda}_{\text{val}}=0.05$, $\mathcal{\lambda}_{\text{ctx}}=0.05$) and consistency regularization ($\lambda_{\text{cons}}=1.0$). For all learning-based baselines, we follow the training configuration described in~\cite{pugnaloni2025table}.

\smallskip\noindent\textbf{Implementation.}
All experiments were conducted on a Linux server with Intel Xeon~E5 CPUs (56 cores),~512\,GB of RAM and an NVIDIA Tesla~P100 GPU (16\,GB VRAM). The implementation, developed in Python and PyTorch, is available at~\cite{code2025}.

\subsection{Accuracy of Overlap Ratio Estimation}\label{subsec:exp-indomain}


\begin{table}[t]
    \centering
    \footnotesize
    
    \caption{MAE ($\downarrow$) for overlap ratio estimation. \hlBest{Dark blue} and \hlSec{light blue} highlight the best and second-best performers. ``\oot{}'' indicates out of time.}
    \label{tab:mae_in}
    
    \begin{threeparttable}
        \begin{tabular}{l c c}
            \toprule
            \textbf{Method} & \textbf{\wiki{}} & \textbf{\git{}} \\
            \midrule
            
            \js{}     & 0.223 & 0.151 \\ 
            \jbu{}    & 0.231 & 0.157 \\
            \jbn{}    & 0.278 & 0.173 \\
            \bertR{}  & \sdval{0.115}{0.0027} & \sdval{0.269}{0.0115} \\
            \bertT{}  & \sdval{0.097}{0.0049} & \sdval{0.127}{0.0036} \\
            \bertHT{} & \sdval{0.207}{0.0009} & \sdval{0.163}{0.0035} \\
            \robR{}   & \sdval{0.135}{0.0045} & \sdval{0.225}{0.0185} \\
            \robT{}   & \sdval{0.101}{0.0042} & \sdval{0.128}{0.0058} \\
            \robHT{}  & \sdval{0.229}{0.0637} & \sdval{0.137}{0.0007} \\
            \turl{}   & \sdval{0.199}{0.0035} & \sdval{0.275}{0.0117} \\
            \embdi{}  & \sdval{0.372}{0.0002} & \oot{}                   \\
            \arma{}   & \hlSec{\sdval{0.069}{0.0006}} & \hlSec{\sdval{0.066}{0.0011}} \\
            \textbf{\our{}} & \hlBest{\sdval{0.040}{0.0018}} & \hlBest{\sdval{0.030}{0.0004}} \\
            
            \bottomrule
        \end{tabular}
    \end{threeparttable}
\end{table}

We first evaluate how accurately each method estimates overlap ratios between individual pairs of tables using the MAE. All methods are tested on held-out pairs from the same dataset. For learning-based approaches, models are trained and tested on disjoint splits, while non-learning baselines are directly applied to the same test pairs for comparability.

Table~\ref{tab:mae_in} shows that \our{} achieves the lowest MAE on both datasets. \embdi{} exceeds the runtime of the exact solver on \git{} and is therefore marked as ``\oot{}''. \our{} improves over \arma{} from $0.069$ to $0.040$ on \wiki{} and from $0.066$ to $0.030$ on \git{}, indicating that the gain is not limited to a single corpus. Compared with set-based baselines, \js{} remains competitive on \git{} but degrades substantially on \wiki{}, where ignoring row--column structure leads to large errors. Transformer-based table embeddings (e.g., \bertT{}/\robT{}/\turl{}) provide mixed benefits and are less reliable across datasets, suggesting that semantic priors are not consistently aligned with the strictly equality-based overlap definition.

Figure~\ref{fig:mae_dist} further breaks down MAE by overlap ratio interval. The largest gaps appear in the low-to-mid overlap range, where accurate estimation requires recovering partial row--column correspondences rather than detecting near-duplicates. In these intervals, \our{} consistently achieves the lowest error on both \wiki{} and \git{}. As $\theta$ approaches $1$, all methods improve because highly overlapping pairs are closer to duplication and are easier to distinguish.

\begin{figure}[t]
    \centering
    \includegraphics[width=0.48\textwidth]{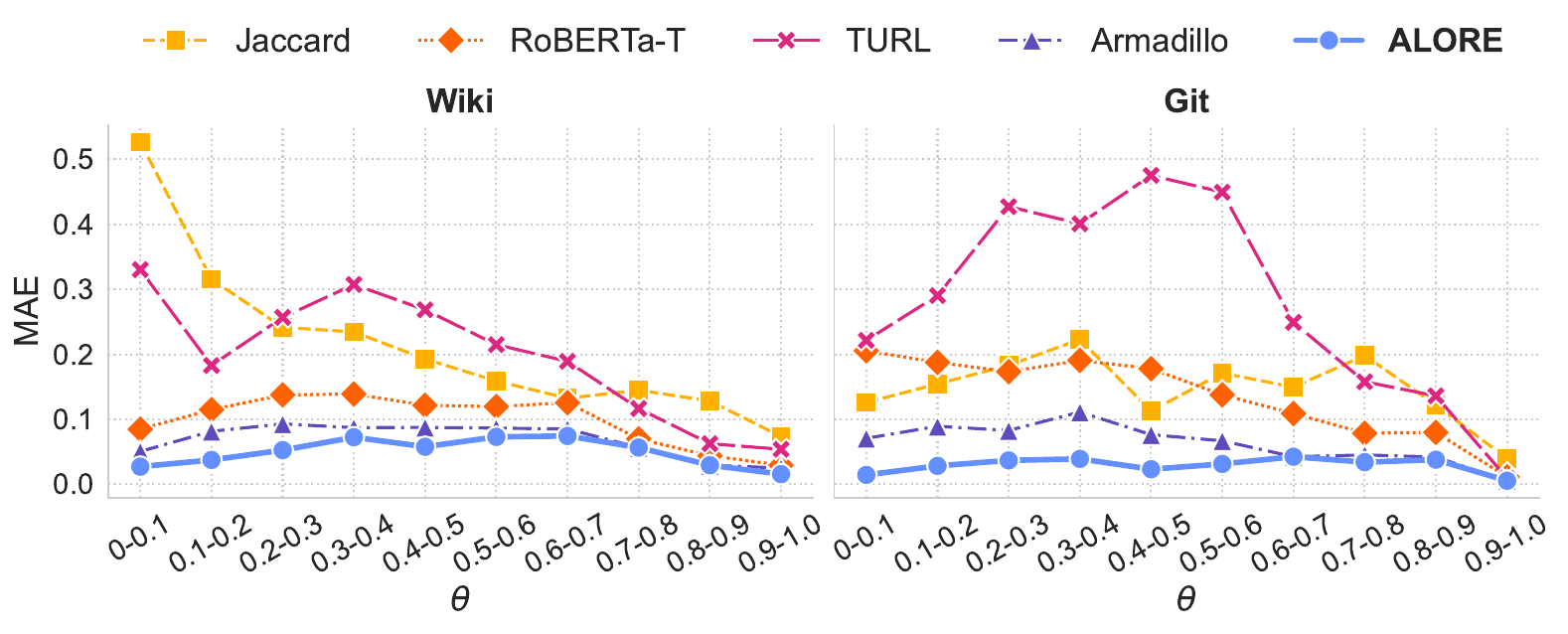}
    \caption[]{MAE ($\downarrow$) of overlap ratio estimation across different ranges of overlap ratio $\theta$.}
    \label{fig:mae_dist}
\end{figure}

\subsection{Domain-Robust Generalization}\label{subsec:exp-crossdomain}

To assess generalization, learning-based methods are trained on one dataset and evaluated on another. This is a strict \emph{zero-shot} setting where no target-domain fine-tuning is performed. Beyond \wiki{} and \git{}, we include \real{} as a target domain with different value distributions and much larger tables (Table~\ref{tab:table_val_dist} and Figure~\ref{fig:table_dim_dist}). \real{} is derived from 2.1 million property sale records: each table is a time-bounded transaction snapshot, where rows are properties and columns include address, sale date, and price. We form pairs by sampling two snapshots that partially share properties, simulating integration or deduplication across extracts (e.g., different months). For shared properties, attributes like sale date and price may differ due to repeated sales and evolving listings, yielding partial overlaps between snapshots. We construct \numprint{10000} such pairs from \numprint{20000} tables. The \real{} dataset is available~at~\cite{code2025}.

\revtwo{Table~\ref{tab:mae_cross} shows that \our{} transfers more reliably than other methods across the evaluated source--target domain pairs.} Relative to \arma{}, \our{} yields consistent improvements across all source--target pairs, with the largest gains on \real{} as the target domain (up to $69\%$) and a substantial improvement for \wiki{}$\rightarrow$\git{} ($38\%$). Notably, \our{} remains strong when transferring from a smaller source domain to a larger or more heterogeneous target, including \wiki{}$\rightarrow$\git{}/\real{} and \git{}$\rightarrow$\real{}. In contrast, semantic table encoders are inconsistent across targets. \turl{} is competitive on \real{} but still trails \our{}. Overall, these results match our design goal of domain robustness. \revtwo{By avoiding reliance on corpus-specific value identities and emphasizing permutation-invariant structural signals, \our{} better preserves accuracy under the evaluated zero-shot domain shifts.}


\begin{table}[t]
    \centering
    \footnotesize
    
    \caption{MAE ($\downarrow$) for cross-domain evaluation. Results are grouped by target domain (test set) for models trained on different source domains (training set).}
    \label{tab:mae_cross}
    
    \begin{threeparttable}
        \begin{tabular}{l c c c c}
            \toprule
            
            \multirow{2}{*}{\textbf{Method}} & \textbf{Target: \wiki{}} & \textbf{Target: \git{}} & \multicolumn{2}{c}{\textbf{Target: \real{}}} \\
            
            \cmidrule(lr){2-2} \cmidrule(lr){3-3} \cmidrule(lr){4-5}
             
             & \textit{Source: \git{}} & \textit{Source: \wiki{}} & \textit{Source: \wiki{}} & \textit{Source: \git{}} \\
            
            \midrule
            
            \bertR{}  & \sdval{0.362}{0.0013} & \sdval{0.395}{0.0007} & \revtwo{\sdval{0.470}{0.0001}} & \revtwo{\sdval{0.471}{0.0001}} \\
            \bertT{}  & \sdval{0.306}{0.0018} & \sdval{0.296}{0.0018} & \revtwo{\sdval{0.297}{0.0037}} & \revtwo{\sdval{0.331}{0.0035}} \\
            \bertHT{} & \sdval{0.294}{0.0019} & \sdval{0.304}{0.0022} & \revtwo{\sdval{0.244}{0.0114}} & \revtwo{\sdval{0.266}{0.0106}} \\
            \robR{}   & \sdval{0.354}{0.0011} & \sdval{0.392}{0.0022} & \revtwo{\sdval{0.467}{0.0002}} & \revtwo{\sdval{0.467}{0.0004}} \\
            \robT{}   & \sdval{0.319}{0.0023} & \sdval{0.300}{0.0027} & \revtwo{\sdval{0.290}{0.0020}} & \revtwo{\sdval{0.271}{0.0029}} \\
            \robHT{}  & \sdval{0.305}{0.0009} & \sdval{0.328}{0.0576} & \revtwo{\sdval{0.251}{0.0194}} & \revtwo{\sdval{0.293}{0.0144}} \\
            \turl{}   & \sdval{0.354}{0.0065} & \sdval{0.320}{0.0019} & \hlSec{\revtwo{\sdval{0.241}{0.0009}}} & \hlSec{\revtwo{\sdval{0.265}{0.0025}}} \\
            \arma{}   & \hlSec{\sdval{0.211}{0.0116}} & \hlSec{\sdval{0.247}{0.0092}} & \revtwo{\sdval{0.465}{0.0022}} & \revtwo{\sdval{0.462}{0.0120}} \\
            \textbf{\our{}} & \hlBest{\sdval{0.208}{0.0101}} & \hlBest{\sdval{0.154}{0.0148}} & \hlBest{\revtwo{\sdval{0.183}{0.0345}}} & \hlBest{\revtwo{\sdval{0.144}{0.0315}}} \\
            
            \bottomrule
        \end{tabular}
    \end{threeparttable}
\end{table}

\begin{table}[t]
    \centering
    \footnotesize
    
    \caption{Ranking performance on \gitq{} measured by nDCG@k ($\uparrow$).}
    \label{tab:rank_ndcg}
    
    \begin{threeparttable}
        \begin{tabular}{l c c c c}
            \toprule
            
            \textbf{Method} & \textbf{nDCG@1} & \textbf{nDCG@10} & \textbf{nDCG@50} & \textbf{nDCG@100} \\
            
            \midrule
            
            \js{}       & \hlSec{0.858} & 0.811 & 0.807 & 0.790 \\
            \jbu{}      & 0.839 & 0.813 & 0.808 & 0.794 \\
            \jbn{}      & 0.782 & \hlSec{0.821} & \hlSec{0.853} & \hlSec{0.861} \\
            
            \bertR{}    & \sdval{0.809}{0.003} & \sdval{0.761}{0.004} & \sdval{0.719}{0.008} & \sdval{0.680}{0.012} \\
            \bertT{}    & \sdval{0.796}{0.004} & \sdval{0.760}{0.002} & \sdval{0.749}{0.001} & \sdval{0.716}{0.002} \\
            \bertHT{}   & \sdval{0.698}{0.009} & \sdval{0.680}{0.003} & \sdval{0.694}{0.003} & \sdval{0.672}{0.003} \\
            \robR{}     & \sdval{0.803}{0.007} & \sdval{0.762}{0.004} & \sdval{0.729}{0.004} & \sdval{0.695}{0.004} \\
            \robT{}     & \sdval{0.817}{0.003} & \sdval{0.763}{0.002} & \sdval{0.735}{0.004} & \sdval{0.702}{0.005} \\
            \robHT{}    & \sdval{0.701}{0.005} & \sdval{0.681}{0.003} & \sdval{0.695}{0.002} & \sdval{0.680}{0.002} \\
            \turl{}     & \sdval{0.719}{0.007} & \sdval{0.709}{0.004} & \sdval{0.693}{0.003} & \sdval{0.672}{0.004} \\
            \arma{}     & \sdval{0.765}{0.014} & \sdval{0.732}{0.003} & \sdval{0.708}{0.005} & \sdval{0.675}{0.008} \\
            \textbf{\our{}} & \hlBest{\sdval{0.881}{0.006}} & \hlBest{\sdval{0.870}{0.004}} & \hlBest{\sdval{0.878}{0.004}} & \hlBest{\sdval{0.862}{0.003}} \\
            
            \bottomrule
        \end{tabular}
    \end{threeparttable}
\end{table}

\subsection{Query-by-Table Retrieval}\label{subsec:exp-retrieval}

We evaluate how overlap ratio estimation supports retrieval over a large table repository. This setting corresponds to the \emph{table querying} scenario in~\cite{pugnaloni2025table}, where given a query table, the system retrieves overlapping tables either by ranking candidates (Section~\ref{subsubsec:exp-rank}) or by filtering with an overlap threshold (Section~\ref{subsubsec:exp-threshold}).

\smallskip\noindent\textbf{Setup.}
The setting consists of a repository of \numprint{10000} tables and a query set of 100 tables, sampled from the test split of \git{}~\cite{pugnaloni2025table} (we denote this benchmark as \gitq{}). For each query, each method predicts overlap ratios for all \numprint{10000} candidates and ranks them by the predicted scores.
The data are highly skewed, where most query--candidate pairs have near-zero overlap, with only a small fraction exhibiting moderate-to-high overlap (distributions are plotted in Appendix~\ref{app:exp-retrieval}). As a result, retrieval quality depends on separating a few truly overlapping tables from many near-zero candidates.
We also evaluate on the TP-TR (small) benchmark~\cite{fan2024gent}: the query set contains 26 TP-TR tables, and the candidate set consists of \wiki{} plus another 32 TP-TR tables. The results are deferred to Appendix~\ref{app:tptr}. 

\subsubsection{Ranking evaluation}\label{subsubsec:exp-rank}

We measure ranking quality using nDCG at cutoff $k$~\cite{jarvelin2002ndcg}, which evaluates how well the predicted ranking matches the ideal ranking at the top of the list. Table~\ref{tab:rank_ndcg} reports nDCG at multiple cutoffs $k$. Small $k$ is included because the query--candidate overlap distribution is highly skewed, as shown in Appendix Figure~\ref{fig:query_olap_dist}, and top-ranked results are the main focus in practice. Overall, \our{} achieves the best ranking quality across all $k$, indicating that it more effectively promotes truly high-overlap tables to the top. Simple Jaccard variants are strong baselines, consistent with prior observations that many high-overlap cases correspond to duplicate or containment relationships where structural modeling is less critical than capturing shared values~\cite{pugnaloni2025table}. Despite this, \our{} still improves the top-$k$ ordering, suggesting that its structure-aware encoding and alignment-guided training better resolve the fine-grained score differences that matter for retrieval.

\subsubsection{Threshold-based retrieval}\label{subsubsec:exp-threshold}

\begin{figure}[t]
    \centering
    \includegraphics[width=0.48\textwidth]{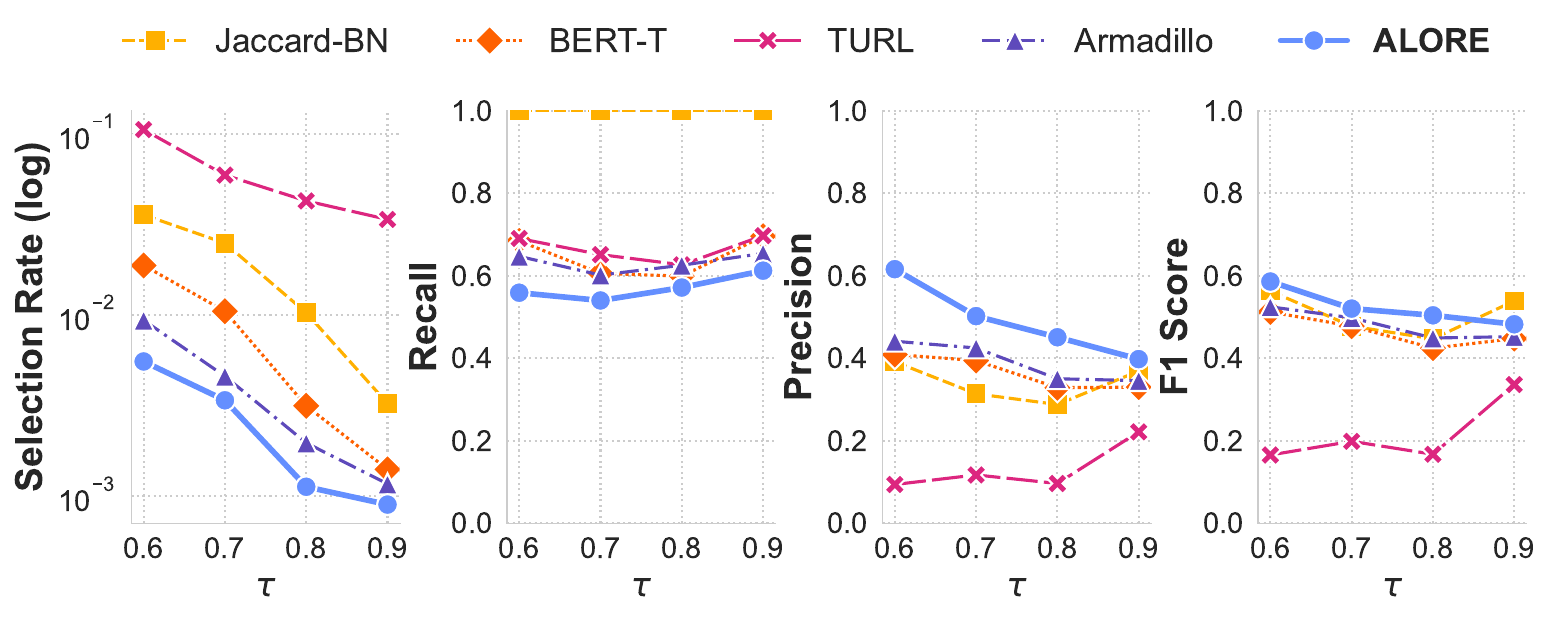}
    \caption[]{\revthree{Threshold-based retrieval performance on \gitq{} across overlap ratio thresholds $\tau$. We report Selection Rate ($\downarrow$), Recall ($\uparrow$), Precision ($\uparrow$), and F1 Score ($\uparrow$).}}
    \label{fig:query_olap_sel}
\end{figure}

Beyond ranking, many pipelines require filtering candidates by a minimum overlap to obtain a small, high-quality candidate subset. This candidate subset can then be passed to expensive downstream processing such as exact overlap computation~\cite{zecchini2024determining}, table reclamation~\cite{fan2024gent}, related table discovery~\cite{das2012finding}, or deduplication~\cite{koch2023duplicate}. We therefore also report threshold-based retrieval curves (Figure~\ref{fig:query_olap_sel}) by varying the overlap threshold $\tau$ and measuring how effectively each method prunes the candidate set while retaining tables with $\theta \ge \tau$. We focus on $\tau \in \{0.6,\,0.7,\,0.8,\,0.9\}$ because (i)~most query-candidate pairs have small overlaps (Figure~\ref{fig:query_olap_dist}), where low thresholds make filtering less discriminative; and (ii)~practical use cases emphasize highly overlapping tables, where compact high-overlap candidate subsets reduce verification cost and user effort.

Across all thresholds, \our{} achieves the lowest selection rate, returning the smallest candidate subset per query. It also consistently attains the highest precision for every $\tau$ and the best F1 for $\tau \in \{0.6,\,0.7,\,0.8\}$, demonstrating a strong effectiveness--cost trade-off. \revthree{The baselines achieve higher recall by selecting larger candidate subsets, but this comes with correspondingly lower precision, which increases downstream cost. Thus, Figure~\ref{fig:query_olap_sel} should be interpreted as a trade-off among selection rate, recall, and precision in threshold-based retrieval. \our{} is most suitable when downstream cost favors compact and precise candidate sets, whereas recall-first blocking pipelines may prefer a less selective method such as \arma{}.}

\revthree{This trade-off can occur despite \our{}'s strong overall estimation accuracy because hard-threshold filtering is sensitive to score calibration near the cutoff $\tau$. Small underestimation around the threshold can turn relevant tables into false negatives. Thus, low global regression error does not necessarily imply the highest recall at every threshold. Adapting \our{} for recall-first filtering would require less selective filtering near the cutoff, for example by favoring recall in threshold tuning or by adding a stronger penalty for false negatives near the target threshold. This limitation is specific to hard-threshold filtering. For ranking-based retrieval, \our{} remains strong and achieves the best ranking quality across all $k$.}

At $\tau=0.9$, \jbn{} achieves slightly higher F1, consistent with prior observations that these high-overlap pairs are near duplicates where shared values alone provide a strong signal~\cite{pugnaloni2025table}. However, it does so by returning a much larger subset (an order of magnitude larger than \our{}), whereas \our{} remains more cost-effective by keeping the subset compact.

\subsection{Efficiency Analysis}\label{subsec:exp-efficiency}

\begin{figure}[t]
    \centering
    \includegraphics[width=0.48\textwidth]{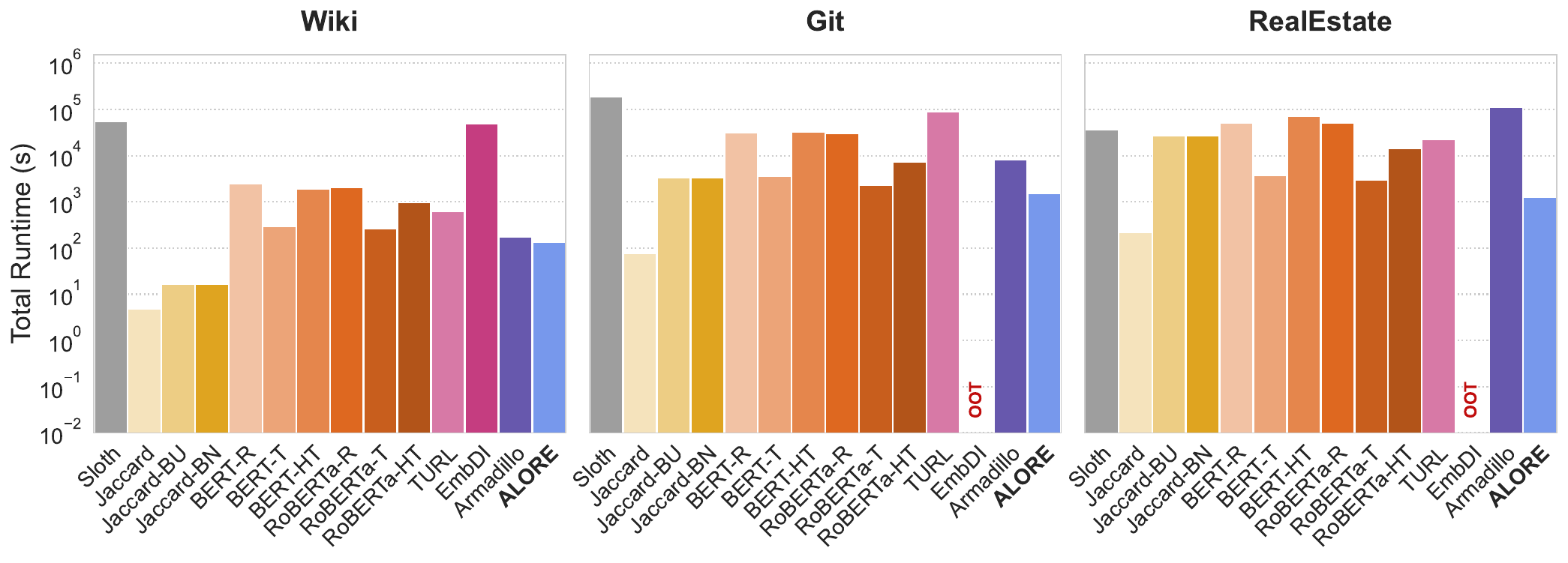}
    \caption[]{\revtwo{End-to-end runtime summed over all pairs in seconds ($\downarrow$). \sloth{} exact times out on 15\% (\wiki{}), 28\% (\git{}), and 1.3\% (\real{}) of pairs and falls back to greedy.}}
    \label{fig:efficiency}
\end{figure}

\begin{table}[t]
    \centering
    \footnotesize 
    \setlength{\tabcolsep}{1pt}
    \sisetup{
        detect-all, 
        group-minimum-digits=4,
        group-separator={\,}
    }
    
    \begin{threeparttable}
        \caption{\revtwo{Mean runtime breakdown in milliseconds ($\downarrow$). Graph construction (\emph{Graph}) and embedding generation (\emph{Embed}) are averaged per table. Inference (\emph{Infer}) is averaged per pair.}}
        \label{tab:efficiency}

        \newcommand{\mc}[1]{\multicolumn{1}{c}{#1}}

        \begin{tabular}{
            l 
            S[table-format=3.1] 
            S[table-format=4.1] 
            S[table-format=3.2] 
            c 
            S[table-format=3.1] 
            S[table-format=4.1] 
            S[table-format=4.2] 
            c 
            S[table-format=4.0] 
            S[table-format=4.1] 
            S[table-format=3.2]
        }
            \toprule
            \multirow{2}{*}{\textbf{Method}} &
            \multicolumn{3}{c}{\textbf{\wiki{}}} & &
            \multicolumn{3}{c}{\textbf{\git{}}} & &
            \multicolumn{3}{c}{\textbf{\real{}}\textsuperscript{\textdagger}} \\
            
            \cmidrule(lr){2-4} \cmidrule(lr){6-8} \cmidrule(lr){10-12}
            
            & \mc{\textit{Graph}} & \mc{\textit{Embed}} & \mc{\textit{Infer}} &
            & \mc{\textit{Graph}} & \mc{\textit{Embed}} & \mc{\textit{Infer}} &
            & \mc{\textit{Graph}} & \mc{\textit{Embed}} & \mc{\textit{Infer}} \\
            
            \midrule

            \sloth{}    & {--} & {--} & 919.76 & & {--} & {--} & 1905.39 & & {--} & {--} & 3646.46 \\
            \js{}       & {--} & {--} & 0.08   & & {--} & {--} & 0.77    & & {--} & {--} & 21.92 \\
            \jbu{}      & {--} & {--} & 0.27   & & {--} & {--} & 33.35   & & {--} & {--} & 2774.64 \\
            \jbn{}      & {--} & {--} & 0.27   & & {--} & {--} & 33.35   & & {--} & {--} & 2774.64 \\
            \bertR{}    & {--} & 127.7  & 0.13 & & {--} & 668.6  & 0.14 & & {--} & 2541.9 & 0.19 \\
            \bertT{}    & {--} & 14.5   & 0.13 & & {--} & 76.6   & 0.13 & & {--} & 189.3  & 0.19 \\
            \bertHT{}   & {--} & 97.2   & 0.13 & & {--} & 701.6  & 0.14 & & {--} & 3566.0 & 0.20 \\
            \robR{}     & {--} & 146.9  & 0.13 & & {--} & 643.5  & 0.14 & & {--} & 2566.6 & 0.21 \\
            \robT{}     & {--} & 13.2   & 0.13 & & {--} & 48.9   & 0.14 & & {--} & 150.6  & 0.20 \\
            \robHT{}    & {--} & 49.0   & 0.13 & & {--} & 154.8  & 0.14 & & {--} & 711.0  & 0.19 \\
            \turl{}     & {--} & 30.9   & 0.13 & & {--} & 1876.9 & 0.13 & & {--} & 11.9   & 0.34 \\
            \embdi{}    & 388.6 & 2149.5 & 0.17 & & {\oot{}} & {\oot{}} & {\oot{}} & & {\oot{}} & {\oot{}} & {\oot{}} \\
            \arma{}     & 7.3   & 1.3    & 0.10 & & 173.2    & 1.1      & 0.11     & & 5666 & 12.0 & 0.13 \\
            \textbf{\our{}} & \bfseries 1.7 & \bfseries 5.0 & \bfseries 0.07 & & \bfseries 28.4 & \bfseries 3.5 & \bfseries 0.06 & & \bfseries 53 & \bfseries 11.0 & \bfseries 0.11 \\
            
            
            \bottomrule
        \end{tabular}
        \begin{tablenotes}[flushleft]
            \scriptsize
            \item[\textdagger]All learning models evaluated on \real{} are trained on \git{}.
        \end{tablenotes}
    \end{threeparttable}
\end{table}

Figure~\ref{fig:efficiency} shows the total end-to-end runtime (log scale) summed over all test pairs. \our{} is up to $89\times$ faster than the state-of-the-art learning-based estimator \arma{}. Across all datasets, \our{} is consistently faster than learning-based methods, with orders of magnitude gains over several baselines. For \sloth{}, we set a longer timeout for the exact solver than the default (60\,s vs.\ 3\,s). Even with this larger budget, the exact phase times out on 15\% of \wiki{} pairs, 28\% of \git{} pairs and 1.3\% of \real{} pairs, triggering the greedy approximation. Table~\ref{tab:efficiency} decomposes the end-to-end runtime into three components: (i)~\emph{graph/hypergraph construction}, (ii)~\emph{embedding generation}, and (iii)~\emph{inference}. They are reported as averages per pair and per table. For embedding-based methods, graph/hypergraph construction and embedding generation can be precomputed once per table and reused across many pair queries. The breakdown shows that embedding generation dominates the cost of learning-based baselines, while graph construction becomes the primary bottleneck for \arma{} due to its tripartite graph with explicit row, column, and cell nodes. By contrast, \our{} maintains low cost in both graph construction and embedding generation, reflecting the linear-time Two-View Row–Column Hypergraph construction and the scalable Row--Column Hypergraph Encoder. Inference cost is negligible for all learning-based methods, and \our{} incurs the smallest overhead since prediction reduces to a compact fusion head over fixed-size representations.

\subsection{Ablation Study}\label{subsec:exp-ablation}

We ablate major components of \our{} while training on \wiki{} and evaluating on \wiki{} (in-domain) and on \git{}/\real{} (cross-domain). Table~\ref{tab:ablation} shows that most removals increase MAE, with several effects more pronounced on cross-domain targets.
\revtwo{Removing inter-table interaction feature $I=[S\parallel J]$ degrades performance most strongly on \real{}, from $0.183$ to $0.304$, showing that these coarse pairwise features are useful under stronger domain shift. However, the interaction-only variant, which uses only $I$ without the structural encoder, performs much worse than both the full model and the no-interaction variant, reaching $0.242$ on \git{} and $0.332$ on \real{}. Thus, inter-table interaction feature $I$ acts as a useful calibration signal rather than a shortcut. It complements the learned structural representation with coarse value-overlap and size--shape signals for the final predictor, but it is not sufficient without the structure-aware encoder, alignment-guided training, and domain-robust value mapping.}

\revtwo{We further decompose $I$ to examine which interaction signal contributes more.} Within $I$, removing the Jaccard anchor $J$ has a larger effect than removing size–shape features $S$, especially on \git{} and \real{}. For the encoder, both row and column hyperedges contribute, while removing the multi-scale readout is the most detrimental encoder ablation on \real{}. Regarding alignment, removing the regularizer $\mathcal{L}_{\text{align}}$ increases MAE on all targets, with Context-Align $\mathcal{L}_{\text{ctx}}$ showing the largest impact on \real{}. Similarly, removing the consistency regularizer $\mathcal{L}_{\text{cons}}$ worsens cross-domain performance, notably on \real{}. Finally, disabling stochastic permutations $\pi$ improves \wiki{} but worsens \git{} and \real{}, indicating a trade-off between in-domain fit and cross-domain robustness.


\begin{table}[t]
    \centering
    \footnotesize
    \setlength{\tabcolsep}{1.5pt}
    \caption{\revtwo{Ablation study of \our{} trained on \wiki{}, in MAE ($\downarrow$).}}
    \label{tab:ablation}
    
    \begin{threeparttable}
        \begin{tabular}{l c c c}
            \toprule
            
            \multirow{2}{*}{\textbf{Method}} & \textbf{Target: \wiki{}} & \textbf{Target: \git{}} & \textbf{Target: RealEst} \\
            
             
             & \textit{Source: \wiki{}} & \textit{Source: \wiki{}} & \textit{Source: \wiki{}} \\
            
            \midrule
            
            \textbf{\our{} (Full model)} & \hlSec{\sdval{0.040}{0.0018}} & \hlBest{\sdval{0.154}{0.0148}} & \hlBest{\revtwo{\sdval{0.183}{0.0345}}} \\
            
            \midrule
            \multicolumn{4}{l}{\textit{Inter-Table Interaction}} \\
            \quad No interaction $I$ & \sdval{0.064}{0.0026} & \sdval{0.190}{0.0040} & \revtwo{\sdval{0.304}{0.0266}} \\
            \quad \quad No size-shape $S$ & \sdval{0.057}{0.0018} & \sdval{0.174}{0.0010} & \revtwo{\sdval{0.204}{0.0167}} \\
            \quad \quad No Jaccard $J$ & \sdval{0.058}{0.0020} & \sdval{0.183}{0.0171} & \revtwo{\sdval{0.222}{0.0087}} \\
            \revtwo{\quad Only interaction $I$} & \revtwo{\sdval{0.074}{0.0029}} & \revtwo{\sdval{0.242}{0.0192}} & \revtwo{\sdval{0.332}{0.0954}} \\
            
            \midrule
            \multicolumn{4}{l}{\textit{Hypergraph \& Encoder}} \\
            \quad No column hyperedges $\mathcal{E}_{\mathrm{col}}$ & \sdval{0.056}{0.0035} & \sdval{0.206}{0.0084} & \revtwo{\sdval{0.220}{0.0724}} \\
            \quad No row hyperedges $\mathcal{E}_{\mathrm{row}}$ & \sdval{0.053}{0.0009} & \sdval{0.172}{0.0220} & \revtwo{\sdval{0.201}{0.0699}} \\
            \quad No multi-scale readout\textsuperscript{\textdagger} & \sdval{0.062}{0.0027} & \sdval{0.195}{0.0131} & \revtwo{\sdval{0.259}{0.0362}} \\
            
            \midrule
            \multicolumn{4}{l}{\textit{Inter-Table Alignment}} \\
            \quad No alignment regularizer $\mathcal{L}_{\text{align}}$ & \sdval{0.057}{0.0026} & \sdval{0.168}{0.0282} & \revtwo{\sdval{0.210}{0.0859}} \\
            \quad \quad No RowColumn-Align $\mathcal{L}_{\text{RC}}$ & \sdval{0.051}{0.0012} & \sdval{0.161}{0.0023} & \revtwo{\sdval{0.211}{0.1333}} \\
            \quad \quad No Value-Align $\mathcal{L}_{\text{val}}$ & \sdval{0.051}{0.0008} & \sdval{0.168}{0.0068} & \hlSec{\revtwo{\sdval{0.193}{0.0301}}} \\
            \quad \quad No Context-Align $\mathcal{L}_{\text{ctx}}$ & \sdval{0.047}{0.0008} & \sdval{0.163}{0.0055} & \revtwo{\sdval{0.239}{0.0560}} \\
            \quad No gating $W_{\text{gate}}$ & \sdval{0.053}{0.0019} & \hlSec{\sdval{0.158}{0.0082}} & \revtwo{\sdval{0.295}{0.1555}} \\
            
            \midrule
            \multicolumn{4}{l}{\textit{Domain Robustness}} \\
            \quad No consistency regularizer $\mathcal{L}_{\text{cons}}$ & \sdval{0.052}{0.0034} & \sdval{0.169}{0.0139} & \revtwo{\sdval{0.223}{0.0071}} \\
            \quad No permutation $\pi$ (and $\mathcal{L}_{\text{cons}}$) & \hlBest{\sdval{0.036}{0.0001}} & \sdval{0.179}{0.0262} & \revtwo{\sdval{0.256}{0.0281}} \\
            
            \bottomrule
        \end{tabular}
        \begin{tablenotes}[flushleft]
            \scriptsize
            \item[\textdagger]No multi-scale readout uses only the last layer pooled embedding $z^{(L)}$.
        \end{tablenotes}
    \end{threeparttable}
\end{table}

\section{Related Work}

\smallskip\noindent\textbf{Table Overlap.}
The closest work to ours is \Arma{}~\cite{pugnaloni2025table}, which embeds each table as a graph and estimates overlap ratio through embedding similarity. Our work follows the same overlap definition, but develops an estimator that more directly models inter-table row/column interactions that determine overlap ratio. The overlap definition is formalized by \Sloth{}~\cite{zecchini2024determining}, which defines the largest overlap through an attribute mapping between column subsets and measures overlap as the area of the induced common subtable. \Sloth{} provides exact and greedy algorithms, but its combinatorial search is expensive at scale, which motivates overlap ratio estimation. Other related work studies duplicate tables across Wikipedia snapshots to track table evolution~\cite{bleifuss2021structured} and lake deduplication under exact match and containment~\cite{koch2023duplicate}.

\smallskip\noindent\textbf{Table Representation Learning.}
Table encoders have been widely studied for table understanding, search, and similarity. Transformer-based models such as TaBERT~\cite{yin2020tabert} and TURL~\cite{deng2022turl} learn contextualized representations for tabular content. Structure-aware methods incorporate row/column structure, such as StruBERT~\cite{trabelsi2022strubert} and the hypergraph-enhanced tabular language model HyTrel~\cite{chen2023hytrel}. EmbDI~\cite{cappuzzo2020embdi} learns graph embeddings from relational data for integration tasks. These methods target general purpose semantic representations, whereas we tailor representations and objectives to overlap ratio estimation with inter-table alignment signals and row/column permutation invariances.

\smallskip\noindent\textbf{Schema Matching and Entity Resolution.}
Schema matching aligns semantically equivalent attributes across tables~\cite{hong2002coma, aumueller2005schema, liu2025magneto}, while entity resolution matches tuples referring to the same real-world entity~\cite{wu2020zeroer, wang2023sudowoodo}. Both typically rely on schema metadata, entity features, domain knowledge, or textual semantics, and their outputs are matchings over attributes or entities. Our task instead predicts the overlap ratio derived from the largest common subtable.


\smallskip\noindent\textbf{Related-Table Discovery.}
Data discovery in data lakes~\cite{miller2018open, das2012finding, zhang2020finding, bogatu2020dataset, wang2023solo} is framed as unionable or joinable table search. Unionable search focuses on semantic column compatibility~\cite{nargesian2018table,khatiwada2023santos,fan2023semantics}, while joinable search focuses on key and value overlap with scalable indexing or hashing~\cite{zhu2019josie,dong2021efficient,esmailoghli2022mate}. In contrast, overlap ratio estimation is defined by the largest common subtable and depends on row--column structure beyond key overlap or semantic relatedness.

\smallskip\noindent\textbf{Graph Similarity and Matching.}
Our setting is loosely connected to graph similarity and matching, which compares structured objects via graph edit distance (GED) or learned similarity. Representative methods include Graph Matching Networks~\cite{li2019graphmatching} and SimGNN~\cite{bai2019simgnn}. More recent work improves similarity prediction via hierarchical matching or explicit alignment regularization, including H2MN~\cite{zhang2021h2mn} and ERIC~\cite{zhuo2022eric}, and GED formulations such as GREED~\cite{ranjan2022greed} and GraphEdX~\cite{jain2024graph}. These methods target graph similarity, while overlap ratio estimation is defined on tables with row--column structure and supervision from the overlap ratio.

\section{Conclusion and Future Work}
We study the table overlap ratio estimation problem. Existing estimators remain limited by underexpressed joint \revfour{row--column structure}, missing inter-table alignment signals under independent encoding, and sensitivity to value distribution shift across domains. We propose \Our{}, which addresses these challenges with a structure-aware hypergraph representation, inter-table alignment-guided learning, and domain-robust value handling. Experiments across multiple datasets show that \Our{} improves accuracy, especially in zero-shot transfer, while remaining efficient. Future work includes extending overlap beyond exact matching to tolerate noise and semantic equivalence, developing stronger retrieval-oriented training and indexing for query-by-table at larger scales, and reducing label dependence via weak supervision or self-training to cover new domains.

\clearpage
\bibliographystyle{ACM-Reference-Format}
\bibliography{references}

\clearpage
\appendix
\section*{Appendix}

\section{Additional Details for Query-by-Table Retrieval}\label{app:exp}

\begin{figure}[H]
    \centering
    \includegraphics[width=0.48\textwidth]{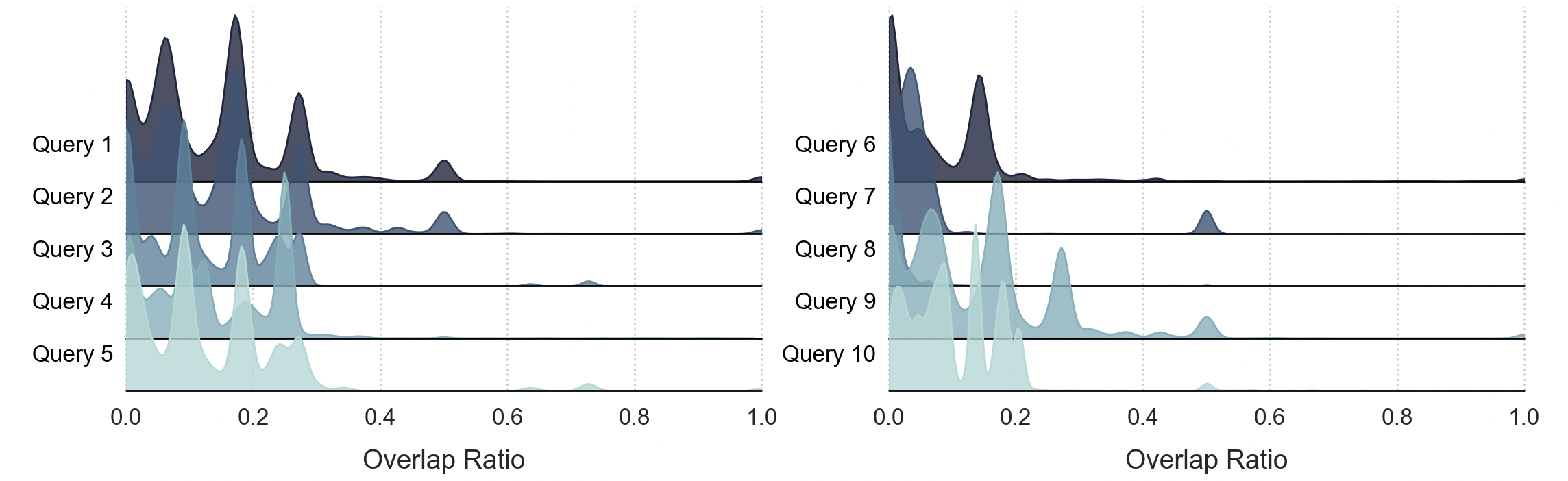}
    \caption[]{Kernel density estimation of overlap ratio distributions for 10 representative query tables on \gitq{}.}
    \label{fig:query_olap_dist}
    \vspace{-3em}
\end{figure}
\begin{figure}[H]
    \centering
    \includegraphics[width=0.48\textwidth]{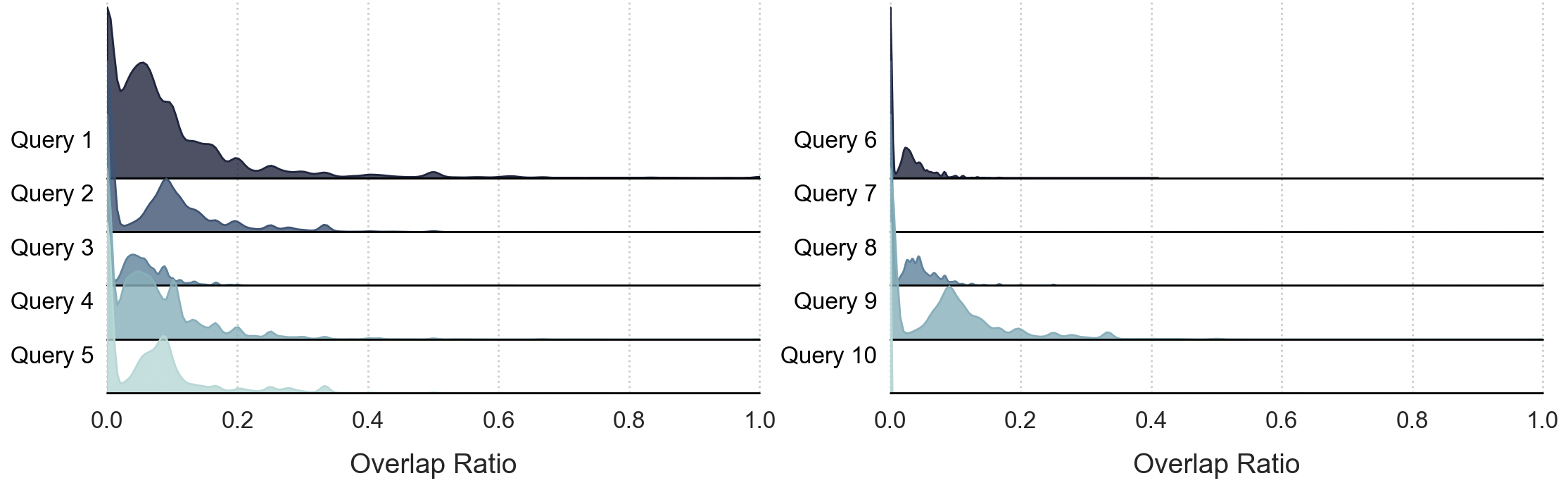}
    \caption[]{Kernel density estimation of overlap ratio distributions for 10 representative query tables on TP-TR.}
    \label{fig:query_olap_dist_tptr}
    \vspace{-1em}
\end{figure}

\subsection{Data Distributions}\label{app:exp-retrieval}

Figures~\ref{fig:query_olap_dist} and~\ref{fig:query_olap_dist_tptr} visualize the ground-truth overlap ratio distributions between each query and its candidates for \gitq{} and TP-TR datasets (10 representative queries shown). In both datasets, overlaps are highly concentrated between 0 and 0.3, with only a small tail of moderate-to-high overlap. This reflects the typical large-corpus setting where most tables are dissimilar, and retrieval performance depends on separating a few truly overlapping tables from many near-zero candidates.

\subsection{Experimental Results on TP-TR Dataset}\label{app:tptr}
We additionally evaluate query-by-table retrieval on the TP-TR (small) benchmark~\cite{fan2024gent}, following the setup where the query set contains 26 TP-TR tables and the candidate set consists of \wiki{} plus 32 additional TP-TR tables. For each query, each method estimates overlap ratios for all candidates and uses the scores for both ranking and threshold-based filtering. Table~\ref{tab:rank_ndcg_tptr} reports nDCG@$k$ at multiple cutoffs, where \our{} achieves the best ranking quality across all $k$, consistently improving over the baselines, which indicates that \our{} better prioritizes highly overlapping tables at the top of the ranked list. Figure~\ref{fig:query_olap_sel_tptr} further evaluates threshold-based retrieval by varying $\tau \in \{0.6, 0.7, 0.8, 0.9\}$ and reporting selection rate, recall, precision, and F1. Overall, \our{} provides the strongest effectiveness-cost trade-off, maintaining higher recall, precision, and F1 while keeping the selected candidate subset compact, whereas baselines including \bertT{} and \turl{} typically return slightly larger subsets to achieve similar recall and \jbn{} can be overly conservative on this dataset.

\begin{table}[H]
    \centering
    \footnotesize
    
    \caption{Ranking performance on TP-TR measured by nDCG@k ($\uparrow$).}
    \label{tab:rank_ndcg_tptr}
    
    \begin{threeparttable}
        \begin{tabular}{l c c c c}
            \toprule
            
            \textbf{Method} & \textbf{nDCG@1} & \textbf{nDCG@10} & \textbf{nDCG@50} & \textbf{nDCG@100} \\
            
            \midrule
            
            \js{}      & 0.720 & 0.609 & 0.499 & 0.476 \\
            \jbu{}   & 0.652 & 0.596 & 0.551 & 0.541 \\
            \jbn{}   & \hlSec{0.781} & \hlSec{0.777} & \hlSec{0.746} & \hlSec{0.753} \\
            \bertR{}    & \sdval{0.577}{0.025} & \sdval{0.560}{0.004} & \sdval{0.488}{0.002} & \sdval{0.453}{0.003} \\
            \bertT{}    & \sdval{0.542}{0.009} & \sdval{0.515}{0.007} & \sdval{0.468}{0.006} & \sdval{0.425}{0.006} \\
            \bertHT{} & \sdval{0.521}{0.011} & \sdval{0.495}{0.006} & \sdval{0.440}{0.004} & \sdval{0.412}{0.008} \\
            
            \robR{} & \sdval{0.594}{0.021} & \sdval{0.572}{0.026} & \sdval{0.525}{0.029} & \sdval{0.499}{0.032} \\
            \robT{} & \sdval{0.588}{0.021} & \sdval{0.565}{0.026} & \sdval{0.518}{0.029} & \sdval{0.495}{0.032} \\
            \robHT{} & \sdval{0.560}{0.021} & \sdval{0.540}{0.026} & \sdval{0.492}{0.029} & \sdval{0.475}{0.032} \\
            
            \turl{}     & \sdval{0.615}{0.021} & \sdval{0.585}{0.026} & \sdval{0.535}{0.029} & \sdval{0.510}{0.032} \\
            \arma{}    & \sdval{0.735}{0.066} & \sdval{0.702}{0.025} & \sdval{0.665}{0.010} & \sdval{0.630}{0.018} \\
            \textbf{\our{}} & \hlBest{\sdval{0.792}{0.021}} & \hlBest{\sdval{0.789}{0.026}} & \hlBest{\sdval{0.765}{0.029}} & \hlBest{\sdval{0.772}{0.032}} \\
            
            \bottomrule
        \end{tabular}
    \end{threeparttable}
    \vspace{-1em}
\end{table}

\begin{figure}[H]
    \centering
    \includegraphics[width=0.48\textwidth]{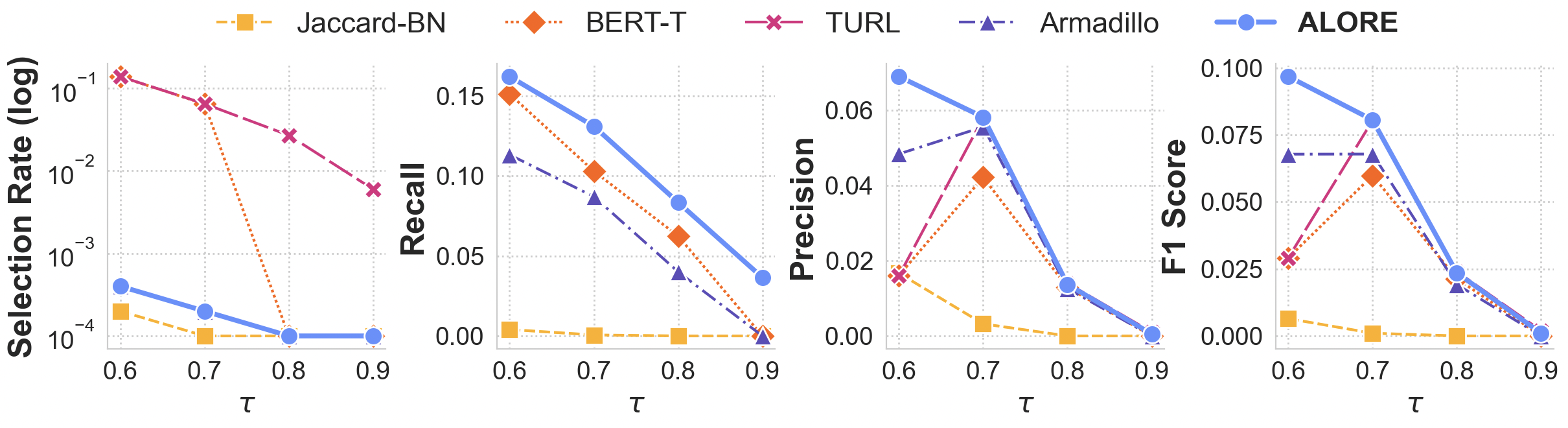}
    \caption[]{Threshold-based retrieval performance on TP-TR across overlap ratio thresholds $\tau$. We report Selection Rate ($\downarrow$), Recall ($\uparrow$), Precision ($\uparrow$), and F1 Score ($\uparrow$).}
    \label{fig:query_olap_sel_tptr}
    \vspace{-1em}
\end{figure}




\end{document}